\newtheorem{theorem}{Theorem}
\newtheorem{lemma}{Lemma}
\DeclareMathOperator{\sign}{sign}
\newtheorem{definition}{Definition}
\newtheorem{fact}{Fact}
\DeclareMathOperator*{\argmax}{arg\,max}
\title{Thompson Sampling on Symmetric $\alpha$-Stable Bandits}
\author{
Abhimanyu Dubey and Alex Pentland\\
Massachusetts Institute of Technology\\
\texttt{\{dubeya, pentland\}@mit.edu}
}
\date{}
\begin{document}

\maketitle

\begin{abstract}
Thompson Sampling provides an efficient technique to introduce prior knowledge in the multi-armed bandit problem, along with providing remarkable empirical performance. In this paper, we revisit the Thompson Sampling algorithm under rewards drawn from $\alpha$-stable distributions, which are a class of heavy-tailed probability distributions utilized in finance and economics, in problems such as modeling stock prices and human behavior. We present an efficient framework for $\alpha$-stable posterior inference, which leads to two algorithms for Thompson Sampling in this setting. We prove finite-time regret bounds for both algorithms, and demonstrate through a series of experiments the stronger performance of Thompson Sampling in this setting. With our results, we provide an exposition of $\alpha$-stable distributions in sequential decision-making, and enable sequential Bayesian inference in applications from diverse fields in finance and complex systems that operate on heavy-tailed features.
\end{abstract}

\section{Introduction}
The multi-armed bandit (MAB) problem is a fundamental model in understanding the \textit{exploration-exploitation} dilemma in sequential decision-making. The problem and several of its variants have been studied extensively over the years, and a number of algorithms have been proposed that optimally solve the bandit problem when the reward distributions are well-behaved, i.e. have a finite support, or are sub-exponential. 

The most prominently studied class of algorithms are the Upper Confidence Bound (UCB) algorithms, that employ an ``optimism in the face of uncertainty'' heuristic~\cite{auer2002finite}, which have been shown to be optimal (in terms of regret) in several cases~\cite{cappe2013kullback, bubeck2013bandits}. Over the past few years, however, there has been a resurgence in interest in the Thompson Sampling (TS) algorithm~\cite{thompson1933likelihood}, that approaches the problem from a Bayesian perspective.

Rigorous empirical evidence in favor of TS demonstrated by~\cite{chapelle2011empirical} sparked new interest in the theoretical analysis of the algorithm, and the seminal work of~\cite{agrawal2012analysis, agrawal2013further, russo2014learning} demonstrated the optimality of TS when rewards are bounded in $[0, 1]$ or are Gaussian. These results were extended in the work of~\cite{korda2013thompson} to more general, exponential family reward distributions. The empirical studies, along with theoretical guarantees, have established TS as a powerful algorithm for the MAB problem.

However, when designing decision-making algorithms for complex systems, we see that interactions in such systems often lead to heavy-tailed and power law distributions, such as modeling stock prices~\cite{bradley2003financial}, preferential attachment in social networks~\cite{mahanti2013tale}, and online behavior on websites~\cite{kumar2010characterization}.

Specifically, we consider a family of extremely heavy-tailed reward distributions known as $\alpha$-stable distributions. This family refers to a class of distributions parameterized by the exponent $\alpha$, that include the Gaussian ($\alpha=2$), L\'evy ($\alpha = 1/2$) and Cauchy ($\alpha=1$) distributions, all of which are used extensively in economics~\cite{frain2009studies}, finance~\cite{carr2003finite} and signal processing~\cite{shao1993signal}. 

The primary hurdle in creating machine learning algorithms that account for $\alpha$-stable distributions, however, is their intractable probability density, which cannot be expressed analytically. This prevents even a direct evaluation of the likelihood under this distribution. Their heavy-tailed nature, additionally, often leads to standard algorithms (such as Thompson Sampling assuming Gaussian rewards), concentrating on incorrect arms.

In this paper, we create two algorithms for Thompson Sampling under symmetric $\alpha$-stable rewards with finite means. Our contributions can be summarized as follows:
\begin{enumerate}
    \item Using auxiliary variables, we construct a framework for posterior inference in symmetric $\alpha$-stable bandits that leads to the first efficient algorithm for Thompson Sampling in this setting, which we call $\alpha$-TS.
    \item To the best of our knowledge, we provide the first finite-time polynomial bound on the Bayesian Regret of Thompson Sampling achieved by $\alpha$-TS in this setting.
    \item We improve on the regret by proposing a modified Thompson Sampling algorithm, called Robust $\alpha$-TS, that utilizes a truncated mean estimator, and obtain the first $\Tilde{O}(N^{\frac{1}{1+\epsilon}})$ Bayesian Regret in the $\alpha$-stable setting. Our bound matches the optimal bound for $\alpha \in (1, 2)$ (within logarithmic factors).
    \item Through a series of experiments, we demonstrate the proficiency of our two Thompson Sampling algorithms for $\alpha$-stable rewards, which consistently outperform all existing benchmarks.
\end{enumerate}
Our paper is organized as follows: we first give a technical overview of the MAB problem, the Thompson Sampling algorithm and $\alpha$-stable distributions. Next, we provide the central algorithm $\alpha$-TS and its analysis, followed by the same for the Robust $\alpha$-TS algorithm. We then provide experimental results on multiple simulation benchmarks and finally, discuss the related work in this area prior to closing remarks.
\section{Preliminaries}
\subsection{Thompson Sampling}
\textit{The $K$-Armed Bandit Problem:} In any instance of the $K$-armed bandit problem, there exists an agent with access to a set of $K$ actions (or ``arms''). The learning proceeds in rounds, indexed by $t \in [1, T]$. The total number of rounds, known as the \textit{time horizon} $T$, is known in advance. The problem is iterative, wherein for each round $t \in [T]$:
\begin{enumerate}
    \item Agent picks arm $a_t \in \mathcal [K]$.
    \item Agent observes reward $r_{a_t}(t)$ from that arm.
\end{enumerate}
For arm $k \in \mathcal [K]$, rewards come from a distribution $\mathcal D_k$ with mean $\mu_k = \mathbb E_{\mathcal D_k}[r]$\footnote{$\alpha$-stable distributions with $\alpha \leq 1$ do not admit a finite first moment. To continue with existing measures of regret, we only consider rewards with $\alpha > 1$.}. The largest expected reward is denoted by $\mu^* = \max_{k \in [K]} \mu_k$, and the corresponding arm(s) is denoted as the \textit{optimal} arm(s) $k^*$. In our analysis, we will focus exclusively on the i.i.d. setting, that is, for each arm, rewards are independently and identically drawn from $\mathcal D_k$, every time arm $k$ is pulled.

To measure the performance of any (possibly randomized) algorithm we utilize a measure known as \textit{Regret} $R(T)$, which, at any round $T$, is the difference of the cumulative mean reward of the algorithm against the expected reward of always playing an optimal arm.
\begin{equation}
     R(T) = \mu^* T - \sum_{t=0}^T \mu_{a_t}
\end{equation}
\textit{Thompson Sampling (TS):} Thompson Sampling~\cite{thompson1933likelihood} proceeds by maintaining a posterior distribution over the parameters of the bandit arms. If we assume that for each arm $k$, the reward distribution $\mathcal D_k$ is parameterized by a (possibly vector) parameter $\mathbf{\theta}_k$ that come from a set $\bf \Theta$ with a prior probability distribution $p(\theta_k)$ over the parameters, the Thompson Sampling algorithm proceeds by selecting arms based on the posterior probability of the reward under the arms. For each round $t \in [T]$, the agent: 
\begin{enumerate}
\item Draws parameters $\hat\theta_{k} (t)$ for each arm $k \in [K]$ from the posterior distribution of parameters, given the previous rewards ${\bf r}_k(t-1) = \{r_k^{(1)}, r_k^{(2)}, ...\}$ till round $t-1$ (note that the posterior distribution for each arm only depends on the rewards obtained using that arm). When $t=1$, this is just the prior distribution over the parameters.
\begin{equation}
    \hat\theta_k(t) \sim p(\theta_k | {\bf r}_k(t-1)) \propto p({\bf r}_k(t-1) | \theta_k)p(\theta_k)
\end{equation}
\item Given the drawn parameters $\hat\theta_{k}(t)$ for each arm, chooses arm $a_t$ with the largest mean reward over the posterior distribution.
\begin{equation}
    a_t = \argmax_{k \in [K]} \mu_k (\hat\theta_k(t))
\end{equation}
\item Obtains reward $r_t$ after taking action $a_t$ and updates the posterior distribution for arm $a_t$.
\end{enumerate}
In the Bayesian case, the measure for performance we will utilize in this paper is the Bayes Regret (BR)\cite{russo2014learning}, which is the expected regret over the priors. Denoting the parameters over all arms as $\bar\theta = \{\theta_1, ..., \theta_k\}$ and their corresponding product distribution as $\bar{\mathcal D} = \prod_i \mathcal D_i$, for any policy $\pi$, the Bayes Regret is given by:
\begin{equation}
    \text{BayesRegret}(T, \pi) = \mathbb E_{\bar\theta \sim \bar{\mathcal D}}[R(T)].
\end{equation}
While the regret provides a stronger analysis, any bound on the Bayes Regret is essentially a bound on the expected regret, since if an algorithm admits a Bayes Regret of $O(g(T))$, then its Expected Regret is also stochastically bounded by $g(\cdot)$~\cite{russo2014learning}. Formally, we have, for constants $M, \epsilon$:
\begin{equation}
    \mathbb P\left(\frac{\mathbb E[R(T) | \bar\theta]}{g(T)} \geq M\right) \leq \epsilon \ \forall \ T \in \mathbb N.
\end{equation}
\subsection{$\alpha$-Stable Distributions}
$\alpha$-Stable distributions, introduced by L\'evy~\cite{levy1925calcul} are a class of probability distributions defined over $\mathbb R$ whose members are closed under linear transformations.
\begin{definition}[\cite{borak2005stable}]
Let $X_1$ and $X_2$ be two independent instances of the random variable $X$. $X$ is \textbf{stable} if, for $a_1 > 0$ and $a_2 > 0$, $a_1X_1 + a_2X_2$ follows the same distribution as $cX + d$ for some $c > 0$ and $d \in \mathbb R$.
\end{definition}
A random variable $X \sim S_\alpha(\beta, \mu, \sigma)$ follows an $\alpha$-stable distribution described by the parameters $\alpha \in (0, 2]$ (characteristic exponent) and $\beta \in [-1, 1]$ (skewness), which are responsible for the shape and concentration of the distribution, and parameters $\mu \in \mathbb R$ (shift) and $\sigma \in \mathbb R^+$ (scale) which correspond to the location and scale respectively. While it is not possible to analytically express the density function for generic $\alpha$-stable distributions, they are known to admit the characteristic function $\phi(x ; \alpha, \beta, \sigma, \mu)$:
\begin{equation*}
    \phi(x ; \alpha, \beta, \sigma, \mu) = \exp\left\{ ix\mu - |\sigma x|^\alpha \left( 1 - i\beta\sign(x)\Phi_\alpha(x)\right) \right\},
\end{equation*}
where $\Phi_\alpha(x)$ is given by
\begin{equation*}\Phi_\alpha(x) =  \left\{
                \begin{array}{ll}
                  \tan(\frac{\pi\alpha}{2})\text{ when } \alpha \neq 1, &
                  -\frac{2}{\pi}\log|x|,\text{ when }\alpha = 1
                \end{array}
              \right.
\end{equation*}

\begin{figure*}[t]
\centering
\small
\includegraphics[width=0.4\linewidth]{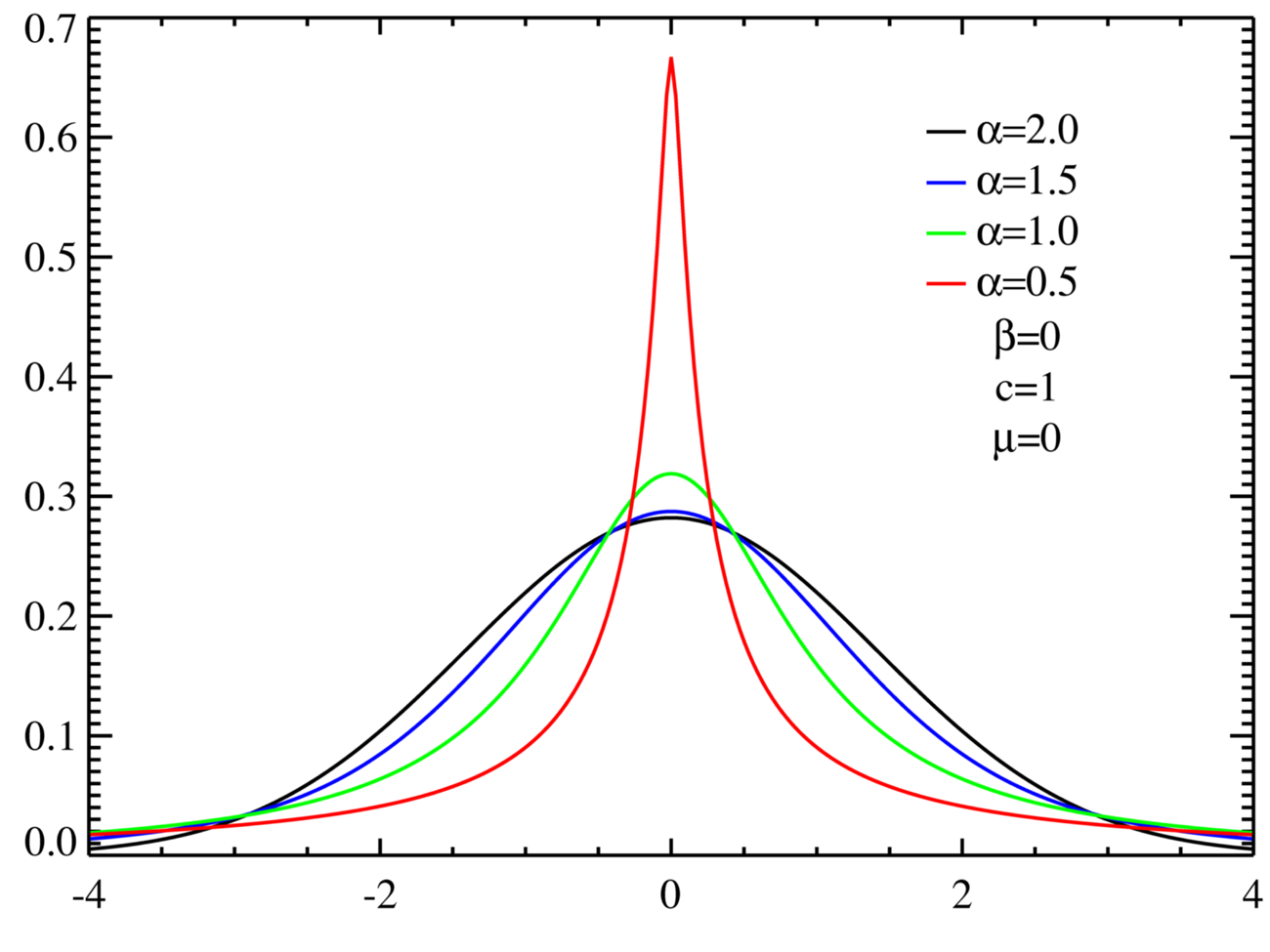}
\caption{Sample probability density for standard ($\mu = 0, \sigma =1$) symmetric $\alpha$-stable distributions with various values of $\alpha$~\cite{wikipedia_image}.}
\label{fig:1}
\end{figure*}
For fixed values of $\alpha, \beta, \sigma$ and $\mu$ we can recover the density function from $\phi(\cdot)$ via the inverse Fourier transform:
\begin{equation*}
    p(z) = \frac{1}{2\pi} \int_{-\infty}^\infty \phi(x ; \alpha, \beta, \sigma, \mu)e^{-izx}dx
\end{equation*}
Most of the attention in the analysis of $\alpha$-stable distributions has been focused on the stability parameter $\alpha$, which is responsible for the tail ``fatness''. It can be shown that asymptotically, the tail behavior ($x \rightarrow \pm \infty$) of $X \sim S_\alpha(\beta, \mu, \sigma)$ follows\cite{borak2005stable}:
\begin{equation*}{\displaystyle f(x)\sim {\frac {1}{|x|^{1+\alpha }}}\left(\sigma^{\alpha }(1+\operatorname {sgn}(x)\beta )\sin \left({\frac {\pi \alpha }{2}}\right){\frac {\Gamma (\alpha +1)}{\pi }}\right)}
\end{equation*}
where $\alpha < 2$ and $\Gamma(\cdot)$ denotes the Gamma function. The power-law relationship admitted by the density is responsible for the heaviness of said tails. 
\begin{fact}[\cite{borak2005stable}]
$X \sim S_\alpha(\beta, \mu, \sigma), \alpha < 2$ admits a moment of order $\lambda$ only when $\lambda \in (-\infty, \alpha)$.
\label{fact:moment}
\end{fact}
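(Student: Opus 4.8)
The plan is to deduce the moment characterization directly from the power‑law tail behaviour of the density recorded above, together with the fact that $\alpha$‑stable densities are bounded and continuous. Write $\mathbb{E}[|X|^\lambda] = \int_{\mathbb{R}} |x|^\lambda f(x)\,dx$, where $f$ is the density obtained from the inverse Fourier transform of $\phi(\cdot\,;\alpha,\beta,\sigma,\mu)$; since $|\phi(x;\alpha,\beta,\sigma,\mu)| = e^{-|\sigma x|^\alpha}$ is integrable for every $\alpha>0$, the function $f$ is finite everywhere, continuous, and bounded, so $\|f\|_\infty < \infty$. Split the integral at $|x|=1$. On $\{|x|\le 1\}$ we have $\int_{-1}^{1} |x|^\lambda f(x)\,dx \le \|f\|_\infty \int_{-1}^{1} |x|^\lambda\,dx$, which is finite for every $\lambda \ge 0$; for negative orders the only issue is the integrability of $|x|^\lambda f(x)$ near the origin, which is governed by the local behaviour of $f$ at $0$ and is not the binding constraint in the regime of interest ($\lambda \ge 1$). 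Hence existence of the $\lambda$‑th moment is controlled entirely by the two tails.

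For the tails, I would upgrade the asymptotic $f(x) \sim c_{\pm}\,|x|^{-(1+\alpha)}$ as $x\to\pm\infty$, with $c_{\pm} = \sigma^{\alpha}(1\pm\beta)\sin(\tfrac{\pi\alpha}{2})\tfrac{\Gamma(\alpha+1)}{\pi} \ge 0$ and not both zero when $\alpha<2$, to honest two‑sided bounds: there exist $x_0 \ge 1$ and constants $0 < a \le b$ such that $a\,|x|^{-(1+\alpha)} \le f(x) \le b\,|x|^{-(1+\alpha)}$ for all $|x|\ge x_0$ on whichever side(s) has $c_{\pm}>0$. Then $\int_{x_0}^{\infty} x^{\lambda} f(x)\,dx$ is finite if and only if $\int_{x_0}^{\infty} x^{\lambda-1-\alpha}\,dx$ is finite, i.e. if and only if $\lambda - 1 - \alpha < -1$, i.e. $\lambda < \alpha$; the same holds for the left tail. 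The upper comparison gives convergence when $\lambda<\alpha$, and the lower comparison gives divergence when $\lambda \ge \alpha$, so no moment of order $\lambda \ge \alpha$ can exist.

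Putting the pieces together yields $\mathbb{E}[|X|^\lambda]<\infty$ exactly for $\lambda<\alpha$ (negative orders being admissible via the origin analysis), which is the assertion. The only substantive ingredient is the tail asymptotic itself; establishing it from scratch would require an asymptotic expansion of the inverse Fourier integral, which is the main technical obstacle in a fully self‑contained treatment — but since that asymptotic is precisely the quoted result of~\cite{borak2005stable}, what remains is the elementary integrability bookkeeping above. The one point requiring a little care is to use the two‑sided tail bounds rather than the $\sim$ relation alone, so that the divergence for $\lambda\ge\alpha$ is rigorous rather than merely heuristic.
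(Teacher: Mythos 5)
The paper does not prove this statement at all: it is quoted as a known fact from the cited reference, with the tail asymptotic $f(x)\sim c_{\pm}|x|^{-(1+\alpha)}$ stated immediately before it as the underlying reason. Your argument is exactly the standard derivation from that asymptotic, and it is sound where it matters: boundedness and continuity of $f$ from integrability of $|\phi(x)|=e^{-|\sigma x|^{\alpha}}$, the split at $|x|=1$, the upgrade of the $\sim$ relation to two-sided bounds on at least one tail (correctly noting that $c_{+}$ and $c_{-}$ cannot both vanish when $\alpha<2$, so the lower bound needed for divergence is available even in the totally skewed case $\beta=\pm1$), and the comparison with $\int x^{\lambda-1-\alpha}\,dx$ giving finiteness exactly for $\lambda<\alpha$ and divergence for $\lambda\ge\alpha$. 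The one place you are right to hedge rather than claim a proof is the negative-order regime: as literally written, the Fact asserts moments of every order in $(-\infty,\alpha)$, but for a density with $f(0)>0$ (e.g.\ any symmetric stable law) the integral $\int_{-1}^{1}|x|^{\lambda}f(x)\,dx$ diverges for $\lambda\le-1$, so the statement is only correct for $\lambda>-1$ (or, as usually quoted, for $0<\lambda<\alpha$). Since the paper only ever invokes this Fact for $\lambda\ge 1$ (existence of the mean for $\alpha>1$ and of the $(1+\epsilon)$-th moment for $\epsilon<\alpha-1$), this imprecision is in the source statement, not a gap in your argument; everything you actually assert is correct.
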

From Fact~\ref{fact:moment} it follows that $\alpha$-stable random variables only admit a finite mean for $\alpha > 1$, and also admit a finite variance only when $\alpha = 2$, which corresponds to the family of Gaussian distributions. To continue with existing measures of regret, we restrict our analysis hence to $\alpha$-stable distributions only with $\alpha > 1$. Note that for all our discussions, $1 < \alpha < 2$, hence, all distributions examined are heavy-tailed, with infinite variance. Additionally, we restrict ourselves to only symmetric ($\beta=0$) distributions: asymmetric distributions do not allow a scaled mixture representation (which is the basis of our framework, see Section~\ref{sec:scaled}).
\subsubsection{Sampling from $\alpha$-Stable Densities}
For general values of $\alpha, \beta, \sigma, \mu$, it is not possible to analytically express the density of $\alpha$-stable distributions, and hence we resort to using auxiliary variables for sampling. The Chambers-Mallows-Stuck~\cite{chambers1976method} algorithm is a straightforward method to generate samples from the density $S_\alpha(\beta, 1, 0)$ (for $\alpha \neq 1$) via a non-linear transformation of a uniform random variable $V$ and an exponential random variable $W$, which can then be re-scaled to obtain samples from $S_\alpha(\beta, \sigma, \mu)$ (Algorithm~\ref{alg:cms_generation}).
\begin{algorithm}[t]
\caption{Chambers-Mallows-Stuck Generation}
\label{alg:cms_generation}
\textbf{Input}: $V \sim U(-\pi/2, \pi/2), W \sim E(1)$\\
\textbf{Output}: $X \sim S_\alpha(\beta, \sigma, \mu)$
\begin{algorithmic}[0] 
\STATE Set $B_{\alpha, \beta} = \arctan(\beta \tan(\pi \alpha / 2))\alpha^{-1}$
\STATE Set $S_{\alpha, \beta} = \left( 1 + \beta^2 \tan^2(\pi \alpha / 2)\right)^{1/(2\alpha)}$
\STATE Set $Y = S_{\alpha, \beta} \times \frac{\sin(\alpha(V + B_{\alpha, \beta})}{\cos(V)^{1/\alpha}} \times \left( \frac{\cos(V - \alpha(V + B_{\alpha, \beta})}{W}\right)^{\frac{1-\alpha}{\alpha}}$
\STATE \textbf{return} $X = \sigma Y + \mu$.
\end{algorithmic}
\end{algorithm}
\subsubsection{Products of $\alpha$-Stable Densities}
A central property of $\alpha$-stable densities that will be utilized in the following section is the behavior of products of independent $\alpha$-stable variables.
\begin{lemma}[\cite{borak2005stable}]
Let $Z$ and $Y > 0$ be independent random variables such that $Z \sim S_\gamma(0, \sigma_1, \mu_1)$ and $Y \sim S_\delta(1, \sigma_2, \mu_2)$. Then $ZY^{1/\gamma}$ is stable with exponent $\gamma\delta$.
\label{lemma:product}
\end{lemma}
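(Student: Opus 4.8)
The natural route is through characteristic functions, exploiting independence by conditioning on $Y$ and then averaging. Write $X := ZY^{1/\gamma}$, and recall from the stated form of the $\alpha$-stable characteristic function that a symmetric ($\beta = 0$), centered variable $Z \sim S_\gamma(0,\sigma_1,0)$ has $\mathbb{E}[e^{ixZ}] = \exp\{-|\sigma_1 x|^\gamma\}$. I would work with $\mu_1 = 0$ throughout, since the identity genuinely needs $Z$ centered: a nonzero $\mu_1$ splits off an extra term $\mu_1 Y^{1/\gamma}$ which is not itself stable, so the statement is the one for the centered symmetric case. Conditioning on $Y = y > 0$ and using independence,
\[
  \mathbb{E}\!\left[ e^{ixX} \,\middle|\, Y = y \right] = \mathbb{E}\!\left[ e^{i\left(xy^{1/\gamma}\right)Z} \right] = \exp\!\left\{ -\left|\sigma_1 x\right|^\gamma y \right\},
\]
where the last step uses $y>0$ so that $|xy^{1/\gamma}|^\gamma = |x|^\gamma y$. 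Averaging over $Y$ gives $\mathbb{E}[e^{ixX}] = \mathbb{E}_Y\!\left[\exp\{-|\sigma_1 x|^\gamma Y\}\right] = \mathcal{L}_Y\!\left(|\sigma_1 x|^\gamma\right)$, with $\mathcal{L}_Y(s) := \mathbb{E}[e^{-sY}]$ the Laplace transform of $Y$.

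Next I would substitute the closed form of the Laplace transform of a positive, maximally skewed ($\beta = 1$) $\delta$-stable variable. Since $Y>0$ almost surely forces $\delta \in (0,1)$ (and the clean statement needs $\mu_2 = 0$), it is classical that $\mathcal{L}_Y(s) = \exp\{-c_\delta\,\sigma_2^\delta\, s^\delta\}$ for a positive constant $c_\delta$ depending only on $\delta$ and the parametrization convention. Plugging in $s = |\sigma_1 x|^\gamma$,
\[
  \mathbb{E}\!\left[e^{ixX}\right] = \exp\!\left\{ -c_\delta\, \sigma_2^\delta\, \left|\sigma_1 x\right|^{\gamma\delta} \right\} = \exp\!\left\{ -\left|\sigma x\right|^{\gamma\delta} \right\}, \qquad \sigma := \sigma_1 \left( c_\delta\, \sigma_2^\delta \right)^{1/(\gamma\delta)}.
\]
This is precisely the characteristic function of $S_{\gamma\delta}(0,\sigma,0)$; since $\delta<1$ we get $\gamma\delta < \gamma \le 2$, so the exponent is admissible, and by uniqueness of characteristic functions $X$ is symmetric $(\gamma\delta)$-stable, as claimed.

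The main obstacle is the Laplace-transform identity for the one-sided stable law $Y$: one has to pin down which $\alpha$-stable parametrization is in force --- the standard conventions differ precisely in how $\sigma$ and the skewness enter the characteristic function --- get the constant $c_\delta$ (of the form $1/\cos(\pi\delta/2)$ in the standard scaling) right, and verify that positivity of $Y$ really restricts $\delta$ to $(0,1)$ so that $\mathcal{L}_Y$ has the pure stretched-exponential form (a permitted shift $\mu_2>0$ would tack on a factor $e^{-\mu_2 s}$ and destroy exact stability). Everything else --- the conditioning step and the final identification --- is routine; running the same computation with the general, asymmetric characteristic function of $Z$ also makes transparent why the lemma requires $Z$ symmetric, since the skewness term would then fail to collapse into stable form after the average over $Y$.
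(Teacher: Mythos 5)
The paper does not prove this lemma at all --- it is imported verbatim from the Borak--H\"ardle--Weron monograph --- so there is no in-paper argument to compare against; your job was to reconstruct the standard proof, and you have done so correctly. The subordination argument (condition on $Y=y$, use symmetry of $Z$ to get $\exp\{-|\sigma_1 x|^\gamma y\}$, then recognize the average over $Y$ as the Laplace transform $\mathcal L_Y(|\sigma_1 x|^\gamma)$ of the one-sided stable law, equal to $\exp\{-\sigma_2^\delta s^\delta/\cos(\pi\delta/2)\}$ for $\delta\in(0,1)$) is the classical route and every step checks out, including the verification that $\gamma\delta<2$ is an admissible index. Your flagged caveats are not pedantry but genuine corrections to the statement as printed: with $\mu_1\neq 0$ the term $\mu_1 Y^{1/\gamma}$ ruins stability, with $\mu_2>0$ the Laplace transform picks up a factor $e^{-\mu_2 s}$ that destroys the pure stretched-exponential form, and $Y>0$ a.s.\ forces $\delta\in(0,1)$. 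It is worth noting that the paper only ever invokes the lemma in exactly the regime where your proof applies --- $\gamma=2$ with $Z$ a centered Gaussian and $Y\sim S_{\alpha/2}(1,1,0)$, i.e.\ $\mu_1=\mu_2=0$ and $\delta=\alpha/2<1$ --- so the restriction you impose is the one the authors implicitly assume; a fully faithful citation of the lemma should state it with $\mu_1=\mu_2=0$.
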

We now begin with the algorithm description and analysis.
\section{$\alpha$-Thompson Sampling}
We consider the setting where, for an arm $k$, the corresponding reward distribution is given by $\mathcal D_k = S_\alpha(0, \sigma, \mu_k)$ where $\alpha \in (1, 2), \sigma \in \mathbb R^+$ are known in advance, and $\mu_k$ is unknown\footnote{Typically, more general settings have been explored for TS in the Gaussian case, where the variance is also unknown. Our algorithm can also be extended to an unknown scale ($\sigma$) using an inverse Gamma prior, similar to~\cite{godsill1999bayesian}.}. Note that $\mathbb E[r_k] = \mu_k$, and hence we set a prior distribution over the variable $\mu_k$ which is the expected reward for arm $k$. We can see that since the only unknown parameter for the reward distributions is $\mu_k$, $\mathcal D_k$ is parameterized by $\theta_k = \mu_k$. Therefore, the steps for Thompson Sampling can be outlined as follows. For each round $t \in [T]$, the agent: 
\begin{enumerate}
\item Draws parameters $\hat\theta_{k} (t) = \bar\mu_k(t)$ for each arm $k \in [K]$ from the posterior distribution of parameters, given the previous rewards ${\bf r}_k(t-1) = \{r_k^{(1)}, r_k^{(2)}, ...\}$ till round $t-1$.
\begin{equation}
    \bar\mu_k(t) \sim p(\mu_k | {\bf r}_k(t-1)) \propto p({\bf r}_k(t-1) | \mu_k) p(\mu_k)
\end{equation}
\item Given the drawn parameters $\bar\mu_{k}(t)$ for each arm, chooses arm $a_t$ with the largest mean reward over the posterior distribution.
\begin{equation}
    a_t  =  \argmax_{k \in [K]} \bar\mu_k(t)
\end{equation}
\item Obtains reward $r_t$ after taking action $a_t$ and updates the posterior distribution for arm $a_t$.
\end{enumerate}

In this section, we will derive the form of the prior distribution, and outline an algorithm for Bayesian inference.
\subsection{Scale Mixtures of Normals}
\label{sec:scaled}
On setting $\gamma = 2$ (Gaussian) and $\beta = \alpha/2 < 1$ in Lemma~\ref{lemma:product}, the product distribution $X = ZY^{1/2}$ is stable with exponent $\alpha$. This property is an instance of the general framework of \textit{scale mixtures of normals} (SMiN)~\cite{andrews1974scale}, which are described by the following:
\begin{equation}
    p_X(x) = \int_{0}^\infty \mathcal N(x | 0, \lambda \sigma^2) p_\Lambda(\lambda)d\lambda
\end{equation}
This framework contains a large class of heavy-tailed distributions which include the exponential power law, Student's-t and symmetric $\alpha$-stable distributions~\cite{godsill1999bayesian}. The precise form of the variable $X$ depends on the \textit{mixing distribution} $p_\Lambda(\lambda)$. For instance, when $p_\Lambda$ is the inverted Gamma distribution (the conjugate prior for a unknown variance Gaussian), the resulting $p_X$ follows a Student's-t distribution.

Bayesian inference directly from $S_\alpha(0, \sigma, \mu)$ is difficult: the non-analytic density prevents a direct evaluation of the likelihood function, and the non-Gaussianity introduces difficulty in its implementation. However, the SMiN representation enables us to draw samples directly from $S_\alpha(0, \sigma, \mu)$ using the auxiliary variable $\lambda$:
\begin{equation}
    x \sim \mathcal N(\mu, \lambda\sigma^2), \lambda \sim S_{\alpha/2}(1, 1, 0)
\end{equation}
This sampling assists in inference since $x$ is Gaussian conditioned on $\lambda$: given samples of $\lambda$, we can generate $x$ from the induced conditional distribution (which is Gaussian).
\subsection{Posteriors for $\alpha$-Stable Rewards}
Let us examine approximate inference for a particular arm $k \in [K]$. At any time $t \in [T]$, assume this arm has been pulled $n_k(t)$ times previously, and hence we have a vector of reward samples ${\bf r}_k(t) = \{ r_k^{(1)}, ..., r_k^{(n_k(t))}\} $ observed until time $t$. Additionally, assume we have $n_k(t)$ samples of an auxiliary variable ${\boldsymbol \lambda}_{k}(t) = \{ \lambda_{k}^{(1)}, ..., \lambda_{i}^{(n_k(t))}\}$ where $\lambda_k \sim S_{\alpha/2}(1, 1, 0)$. 

Recall that $r_k \sim S_\alpha(0, \sigma, \mu_k)$ for an unknown (but fixed) $\mu_k$. From the SMiN representation, we know that $r_k$ is conditionally Gaussian given the auxiliary variable $\lambda_k$, that is $p(r_k | \lambda_k, \mu_k) \sim \mathcal N(\mu_k, \lambda_k\sigma^2)$. We can then obtain the conditional likelihood as the following:
\begin{equation}p({\bf r}_{k}(t) | {\boldsymbol \lambda}_{k}(t), \mu_k) \propto \exp\left(-\frac{1}{2\sigma^2} \left(\sum_{i=1}^{n_k(t)} \frac{(r_k^{(i)} - \mu_k)^2}{\lambda_k^{(i)}}\right) \right).
\end{equation}
We can now assume a conjugate prior over $\mu_k$, which is a normal distribution with mean $\mu^0_k$ and variance $\sigma^2$. We then obtain the posterior density for $\mu_k$ as (full derivation in the Appendix):
\begin{equation}
    p(\mu_k | {\bf r}_{k}(t), {\boldsymbol \lambda}_{k}(t)) \propto \mathcal N\left(\hat\mu_k(t), \hat\sigma^2_{k}(t)\right) \text{ where, }
    \hat\sigma^2_{k}(t) = \frac{\sigma^2}{\sum_{i=1}^{n_k(t)} \frac{1}{\lambda_k^{(i)}} + 1},\ \hat\mu_{k}(t) = \frac{\sum_{i=1}^{n_k(t)} \frac{r_k^{(i)}}{\lambda_k^{(i)}}+\mu^0_k }{\sum_{i=1}^{n_k(t)} \frac{1}{\lambda_k^{(i)}} + 1}.
\end{equation}
We know that $\hat\sigma^2_k(t) > 0$ since $\lambda_k^{(i)} > 0 \ \forall i$ as they are samples from a positive stable distribution ($\beta = 1$). Given ${\bf r}_{k}(t)$ and $\mu_k$, we also know that the individual elements of ${\boldsymbol \lambda}_{k}(t)$ are independent, which provides us with the following decomposition for the conditional density of ${\boldsymbol \lambda}_{k}(t)$,
\begin{equation}
    p({\boldsymbol \lambda}_{k}(t) | {\bf r}_{k}(t), \mu_k) = \prod_{i=1}^{n_k(t)} p( \lambda_{k}^{(i)} | r_{k}^{(i)}, \mu_k), \text{ where, } 
     p( \lambda_{k}^{(i)} | r_k^{(i)}, \mu_k) \propto \mathcal N(r_{k}^{(i)} | \mu_k, \lambda_{k}^{(i)},\sigma^2) f_{\alpha/2, 1}(\lambda_{k}^{(i)}).
\end{equation}
Here, $f_{\alpha, \beta}( \cdot )$ is the density of a random variable following $S_\alpha(\beta, 1, 0)$. Our stepwise posterior sampling routine is hence as follows. At any time $t$, after arm $k$ is pulled and we receive reward $r_k^{n_k(t)}$, we set ${\bf r}_{k}(t) = [{\bf r}_{k}(t-1), r_k^{n_k(t)}]$. Then for a fixed $Q$ iterations, we repeat:
\begin{enumerate}
    \item For $i \in [1, n_k(t)]$, draw ${\lambda_{k}^{(i)}} \sim p(\lambda_{k}^{(i)} | r_{k}^{(i)}, \mu_k(t))$.
    \item Draw $\mu_k(t) \sim p(\mu_k | {\bf r}_{k}(t), {\boldsymbol \lambda}_{k}(t))$ .
\end{enumerate}
Sampling from the conditional posterior of $\mu_k$ is straightforward since it is Gaussian. To sample from the complicated posterior of $\lambda_k^{(i)}$, we utilize rejection sampling.
\subsection{Rejection Sampling for $\lambda_{k}^{(i)}$}
\label{sec:rejection_sampling}
Sampling directly from the posterior is intractable since it is not analytical. Therefore, to sample $\lambda_{k}^{(i)}$ we follow the pipeline described in~\cite{godsill1999bayesian}. We note that the likelihood of the mean-normalized reward $v_k^{(i)} = r_k^{(i)} - \mu_k (t)$ forms a valid rejection function since it is bounded:
\begin{equation}
    p\left(v_k^{(i)} | 0, \lambda_k^{(i)}\sigma^2\right) \leq  \frac{1}{v_k^{(i)}\sqrt{2\pi}}\exp(-1/2)
\end{equation}
Since $v_k^{(i)} \sim \mathcal N(0; \lambda_k^{(i)}\sigma^2)$. Thus, we get the procedure:
\begin{enumerate}
    \item Draw $\lambda_k^{(i)} \sim S_{\alpha/2}(1, 1, 0)$ (using Algorithm~\ref{alg:cms_generation}).
    \item Draw $u \sim \mathcal U\left(0 , ({v_k^{(i)}}\sqrt{2\pi})^{-1}\exp(-1/2)\right)$.
    \item If $u > p(v_k^{(i)} | 0, \lambda_k^{(i)}\sigma^2)$, reject $\lambda_k^{(i)}$ and go to Step 1.
\end{enumerate}
Combining all these steps, we can now outline our algorithm, $\alpha$-Thompson Sampling ($\alpha$-TS) as described in Algorithm~\ref{alg:alpha_ts}.
\begin{algorithm}[t]
\caption{$\alpha$-Thompson Sampling}
\label{alg:alpha_ts}
\begin{algorithmic}[1] 
\STATE \textbf{Input}: Arms $k \in [K]$, priors $\mathcal N(\mu^0_k, \sigma^2)$ for each arm.
\STATE Set $ D_k = 1, N_k = 0$ for each arm $k$.
\FOR{For each iteration $t \in [1, T]$}
\STATE Draw $\bar\mu_k(t) \sim \mathcal N\left(\frac{\mu^0_k + N_k}{D_k}, \frac{\sigma^2}{D_k}\right)$ for each arm $k$.
\STATE Choose arm $A_t = \displaystyle\argmax_{k \in [K]} \bar\mu_k(t)$, and get reward $r_t$.
\FOR{$q \in [0, Q)$}
\STATE Calculate $v_{A_t}^{(t)} = r_t - \bar\mu_{A_t}$.
\STATE Draw $\lambda_k^{(t)}$ following Section~\ref{sec:rejection_sampling}.
\STATE Set $D_q = D_k + 1/\lambda_k^{(t)}, N_q = N_k + r_t/\lambda_k^{(t)}$.
\STATE Draw $\bar\mu_{A_t} \sim \mathcal N\left(\frac{\mu^0_k + N_q}{D_q}, \frac{\sigma^2}{D_q}\right)$.
\ENDFOR
\STATE Set $D_k = D_k + 1/\lambda_k^{(t)}, N_k = N_k + r_t/\lambda_k^{(t)}$.
\ENDFOR
\end{algorithmic}
\end{algorithm}

It is critical to note that in Algorithm~\ref{alg:alpha_ts}, we do not draw from the full vector of ${\boldsymbol \lambda}_{k}(t)$ at every iteration, but only from the last obtained reward. This is done to accelerate the inference process, and while it leads to a slower convergence of the posterior, we observe that it performs well in practice. Alternatively, one can re-sample ${\boldsymbol \lambda}_{k}(t)$ over a fixed window of the previous rewards, to prevent the runtime from increasing linearly while enjoying faster convergence.
\subsection{Regret Analysis}
In this section, we derive an upper bound on the finite-time Bayesian Regret (BR) incurred by the $\alpha$-TS algorithm. We continue with the notation used in previous sections, and assume a $K$ armed bandit with $T$ maximum trials. Each arm $k$ follows an $\alpha$-stable reward $S_\alpha(0, \sigma, \mu_k)$, and without loss of generality, let $\mu^* = \max_{k \in [K]} \mu_i$ denote the arm with maximum mean reward. 
\begin{theorem}[Regret Bound]
Let $K> 1, \alpha \in (1, 2), \sigma \in \mathbb R^+, \mu_{k: k \in [K]} \in \mathbb [-M, M]$. For a $K$-armed bandit with rewards for each arm $k$ drawn from $S_\alpha(\beta, \sigma, \mu_k)$, we have, asymptotically, for $\epsilon$ chosen a priori such that $\epsilon \rightarrow (\alpha - 1)^-$,
\begin{equation*}
    \text{BayesRegret}(T, \pi^{TS}) = O(K^{\frac{1}{1+\epsilon}}T^{\frac{1+\epsilon}{1+2\epsilon}})
\end{equation*}
\label{thm:alpha_ts}
\end{theorem}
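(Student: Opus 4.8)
The plan is to adapt the standard Bayesian regret decomposition of \cite{russo2014learning} to the heavy-tailed $\alpha$-stable setting, where the key difficulty is that the per-arm posterior mean $\hat\mu_k(t)$ concentrates only \emph{polynomially} (not exponentially) because the rewards have infinite variance for $\alpha < 2$. First I would introduce upper-confidence-bound functions $U_k(t)$ built around the posterior mean, chosen so that $U_k(t)$ is measurable with respect to the history $\mathcal H_{t-1}$ and satisfies $\mathbb P(\mu_k > U_k(t)) $ and $\mathbb P(\bar\mu_k(t) > U_k(t))$ both small. By the posterior-matching property of Thompson Sampling — conditioned on $\mathcal H_{t-1}$, the sampled arm $A_t$ and the optimal arm $k^*$ are identically distributed — one gets the textbook bound
\begin{equation*}
\text{BayesRegret}(T,\pi^{TS}) \le \mathbb E\!\left[\sum_{t=1}^T \big(U_{A_t}(t) - \mu_{A_t}\big)\right] + \mathbb E\!\left[\sum_{t=1}^T \big(\mu_{k^*} - U_{k^*}(t)\big)\right].
\end{equation*}
The second term is controlled directly by the tail bound on $\mathbb P(\mu_{k^*} > U_{k^*}(t))$ summed over $t$; the first term is the main object of study and must be bounded by a sum over arms of the ``cost'' of the confidence width shrinking as the arm is pulled.

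The technical core is a concentration estimate for the posterior: I would show that with the width parameter tuned to the moment exponent $1+\epsilon < \alpha$, the posterior mean $\hat\mu_k(t)$ after $n$ pulls deviates from $\mu_k$ by more than $w_n$ with probability at most $O(n \cdot w_n^{-(1+\epsilon)})$ — a Markov/truncation-style bound using Fact~\ref{fact:moment} (finite moments of order $\lambda < \alpha$), since the weighted estimator $\sum r_k^{(i)}/\lambda_k^{(i)} \big/ \sum 1/\lambda_k^{(i)}$ is, conditionally on the auxiliary variables, a Gaussian around $\mu_k$, and marginally inherits the $(1+\epsilon)$-moment control of the $\alpha$-stable reward. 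Setting $U_k(t) = \hat\mu_k(t) + w_{n_k(t)}$ with $w_n \asymp (n/\beta_t)^{?}$ for a slowly growing $\beta_t$ (polylog or small power of $T$), the first sum telescopes: for each arm the contribution is $\sum_{n=1}^{T} w_n$, and summing the instantaneous gaps over all arms with $\sum_k n_k(T) = T$, a Hölder/Jensen balancing of the $K$ arms against the horizon $T$ produces the stated $O(K^{1/(1+\epsilon)} T^{(1+\epsilon)/(1+2\epsilon)})$ rate after optimizing the free threshold.

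I would proceed in the following order: (i) state and prove the posterior concentration lemma for $\hat\mu_k(t)$ using the scale-mixture representation and Fact~\ref{fact:moment}; (ii) define $U_k(t)$ and verify the two tail conditions, choosing the confidence level so the second regret term is lower-order; (iii) write the Russo--Van Roy decomposition and bound the first term by $\sum_k \sum_{n} w_n$ plus a rare-event remainder; (iv) optimize the width exponent and the arm/horizon trade-off via Hölder's inequality to extract the exponent $\tfrac{1+\epsilon}{1+2\epsilon}$ in $T$ and $\tfrac{1}{1+\epsilon}$ in $K$; (v) take $\epsilon \to (\alpha-1)^-$ to state the asymptotic form. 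The main obstacle is step (i): unlike the sub-Gaussian case, there is no exponential concentration, so the confidence width cannot shrink like $\sqrt{\log t / n}$ — it must shrink only polynomially, and one has to be careful that the auxiliary variables $\lambda_k^{(i)}$ (which are heavy-tailed, drawn from $S_{\alpha/2}(1,1,0)$ with no finite mean) do not destroy the estimator; the resolution is that $1/\lambda_k^{(i)}$ has well-behaved positive moments, so the \emph{precision} $\sum 1/\lambda_k^{(i)}$ grows linearly in $n$ in expectation, and the marginal $(1+\epsilon)$-moment of $r_k - \mu_k$ is what ultimately limits the rate.
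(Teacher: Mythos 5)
Your overall architecture matches the paper's: both use the Russo--Van Roy decomposition of Bayesian regret into upper-confidence-bound terms, both drive the analysis with a polynomial concentration bound coming from the finite $(1+\epsilon)$-th moment of $\alpha$-stable rewards (with $\epsilon<\alpha-1$), and both finish by summing the shrinking confidence widths over pulls and applying H\"older's inequality to trade off $K$ against $T$, then tuning the confidence level $\delta$ to balance the rare-event remainder against the main term. That skeleton is correct and is exactly what the paper does.

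The genuine gap is in your step (i). You propose to prove concentration for the algorithm's \emph{posterior mean} $\hat\mu_k(t)=\bigl(\sum_i r_k^{(i)}/\lambda_k^{(i)}+\mu_k^0\bigr)\big/\bigl(\sum_i 1/\lambda_k^{(i)}+1\bigr)$, and your sketched justification --- that the precision $\sum_i 1/\lambda_k^{(i)}$ grows linearly in expectation and that the weighted estimator ``marginally inherits'' the $(1+\epsilon)$-moment control --- does not go through as stated. The auxiliary variables $\lambda_k^{(i)}$ are resampled from their \emph{posterior given} $r_k^{(i)}$, so the weights $1/\lambda_k^{(i)}$ are correlated with the rewards; the estimator is a ratio of dependent heavy-tailed sums, its bias and moments are not controlled by Fact~\ref{fact:moment} alone, and the expectation of the ratio is not the ratio of expectations. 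The paper sidesteps this entirely by exploiting the fact that the Russo--Van Roy identity holds for \emph{any} history-measurable sequence $U_k(t)$, whether or not it is related to the algorithm's actual posterior: it builds $U_k(t)$ around the plain empirical mean $\hat r_k(t)=\frac{1}{n_k(t)}\sum_i r_k^{(i)}$, to which the Bubeck--Cesa-Bianchi--Lugosi moment concentration (their Lemma 3) applies directly, with the centered moment $v_\epsilon=C(1+\epsilon,\alpha)\sigma^{(1+\epsilon)/\alpha}$ computed from the $\alpha$-stable moment formula. You should make the same move; otherwise step (i) requires a substantial new argument you have not supplied. A smaller point: your claimed tail bound $O\bigl(n\,w_n^{-(1+\epsilon)}\bigr)$ for the deviation of the \emph{mean} has the wrong $n$-dependence (it worsens with $n$); the correct form, e.g.\ via von Bahr--Esseen plus Markov, is $O\bigl(n^{-\epsilon}w_n^{-(1+\epsilon)}\bigr)$, which is what yields a width shrinking like $n^{-\epsilon/(1+\epsilon)}$ and ultimately the exponent $\tfrac{1+\epsilon}{1+2\epsilon}$ after optimizing $\delta$ against the $KMT^2\delta$ remainder.
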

\textit{Proof-sketch.} We first utilize the characteristic function representation of the probability density to obtain the centered $(1+\epsilon)$ moment of the reward distribution for each arm. We use this moment to derive a concentration bound on the deviation of the empirical mean of $\alpha$-stable densities. Next, we proceed with the decomposition of the Bayesian Regret in terms of upper-confidence bounds, as done in~\cite{russo2014learning}. Using our concentration result, we set an upper confidence bound at any time $t$ for any arm $k$, and then obtain the result via algebraic manipulation. The complete proof has been omitted here for brevity, but is included in detail in the appendix.

We note the following: First, the only additional assumption we make on the reward distributions is that the true means are bounded, which is a standard assumption~\cite{russo2014learning} and easily realizable in most application cases. Next, given the established polynomial deviations of the empirical mean, obtanining a polylogarithmic regret is not possible, and the selection of $\epsilon$ governs the finite-time regret. As $\epsilon \rightarrow \alpha - 1$, we see that while the growth of $T^{\frac{1+\epsilon}{1+2\epsilon}}$ decreases, the constants in the finite-time expression grow, and are not finite at $\epsilon = \alpha -1$. This behavior arises from the non-compactness of the set of finite moments for $\alpha$-stable distributions (see Appendix for detailed analysis).

Compared to the problem-independent regret bound of $O(\sqrt{KT\log T})$ for Thompson Sampling on the multi-armed Gaussian bandit problem demonstrated by~\cite{agrawal2013further}, our bound differs in two aspects: first, we admit a $K^{\frac{1}{1+\epsilon}}$ complexity on the number of arms, which contrasted with the Gaussian bandit is identical when $\epsilon \rightarrow 1$. Second, we have a superlinear dependence of order $T^{\frac{1+\epsilon}{1+2\epsilon}}$. Compared to the Gaussian case, we see that we obtain a polynomial regret instead of a polylogarithmic regret, which can be attributed to the polynomial concentration of heavy-tailed distributions.

In the next section, we address the issue of polynomial concentration by utilizing the more robust, truncated mean estimator instead of the empirical mean, and obtain a modified, robust version of $\alpha$-TS.
\begin{algorithm}[t]
\caption{Robust $\alpha$-Thompson Sampling}
\label{alg:robust_alpha_ts}
\begin{algorithmic}[1] 
\STATE \textbf{Input}: Arms $k \in [K]$, priors $\mathcal N(\mu^0_k, \sigma^2)$ for each arm.
\STATE Set $ D_k = 1, N_k = 0$ for each arm $k$.
\FOR{For each iteration $t \in [1, T]$}
\STATE Draw $\bar\mu_k(t) \sim \mathcal N\left(\frac{\mu^0_k + N_k}{D_k}, \frac{\sigma^2}{D_k}\right)$ for each arm $k$.
\STATE Choose arm $A_t = \argmax_k \bar\mu_k(t)$, and get reward $r_t$.
\FOR{$q \in [0, Q)$}
\STATE Calculate $v_{A_t}^{(t)} = r_t - \bar\mu_{A_t}$.
\STATE Sample $\lambda_k^{(t)}$ following Section~\ref{sec:rejection_sampling}.
\STATE Set $D_q = D_k + 1/\lambda_k^{(t)}, N_q = N_k + r_t/\lambda_k^{(t)}$.
\STATE Sample $\bar\mu_{A_t} \sim \mathcal N\left(\frac{\mu^0_k + N_q}{D_q}, \frac{\sigma^2}{D_q}\right)$.
\ENDFOR
\STATE Set $D_k = D_k + 1/\lambda_k^{(t)}, N_k = N_k + r_t/\lambda_k^{(t)}$.
\STATE Recompute robust mean.
\ENDFOR
\end{algorithmic}
\end{algorithm}
\subsection{Robust $\alpha$-Thompson Sampling}
Assume that for all arms $k \in [K], \mu_k \leq M$. Note that this assumption is equivalent to the boundedness assumption in the analysis of $\alpha$-TS, and is a reasonable assumption to make in any practical scenario with some domain knowledge of the problem. Let $\delta \in (0, 1), \epsilon \in (0, \alpha-1)$. Now, consider the truncated mean estimator $\hat r_k^*(t)$ given by:
\begin{multline}
    \hat r_k^*(t) = \frac{1}{n_k(t)}\sum_{i=1}^{n_k(t)}r_k^{(i)}\mathbbm{1}\left\{|r_k^{(i)}| \leq \left(\frac{H(\epsilon, \alpha, \sigma)\cdot i}{2\log(T)}\right)^{\frac{1}{1+\epsilon}} \right\} \\
    \text{where, } H(\epsilon, \alpha, \sigma) = \left(\frac{\epsilon\left(M \cdot \Gamma(-\epsilon/\alpha)+ \sigma\alpha\Gamma(1 - \frac{\epsilon + 1}{\alpha})\right)}{\sigma\alpha\sin\left(\frac{\pi\cdot\epsilon}{2}\right)\Gamma(1-\epsilon)}\right)
    \label{eqn:truncated}
\end{multline}
$\hat r^*_k(t)$ then describes a truncated mean estimator for an arm $k$ (pulled $n_k(t)$ times), where a reward $r_k^{(i)}$ at any trial $i$ of the arm is discarded if it is larger than the bound. Intuitively, we see that this truncated mean will prevent outliers from affecting the posterior. We choose such a form of the truncation since it allows us to obtain an exponential concentration for $\alpha$-stable densities, which will assist us in obtaining polylogarithmic regret.

As can be seen, the corresponding Robust $\alpha$-TS algorithm is identical to the basic $\alpha$-TS algorithm, except for this step of rejecting a reward (and replacing it with 0) and is outlined in Algorithm~\ref{alg:robust_alpha_ts}. We now describe the regret incurred by this modified $\alpha$-TS algorithm.
\begin{theorem}[Regret Bound]
Let $K> 1, \alpha \in (1, 2), \sigma \in \mathbb R^+, \mu_{k: k \in [K]} \in \mathbb [-M, M]$. For a $K$-armed bandit with rewards for each arm $k$ drawn from $S_\alpha(\beta, \sigma, \mu_k)$, we have, for $\epsilon$ chosen a priori such that $\epsilon \rightarrow (\alpha - 1)^-$ and truncated estimator given in Equation~(\ref{eqn:truncated}), 
\begin{equation*}
    \text{BayesRegret}(T, \pi^{RTS}) = \Tilde{O}\left((KT)^{\frac{1}{1+\epsilon}}\right)
\end{equation*}
\label{theorem:regret_log}
\end{theorem}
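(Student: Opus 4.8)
The plan is to mirror the three-stage structure behind Theorem~\ref{thm:alpha_ts}---a fractional-moment computation, a concentration inequality, and the confidence-bound decomposition of the Bayes regret from~\cite{russo2014learning}---but to replace the polynomial concentration of the empirical mean by an \emph{exponential} concentration of the truncated estimator $\hat r^*_k(t)$ of~(\ref{eqn:truncated}); this is exactly what drops the horizon exponent from $\tfrac{1+\epsilon}{1+2\epsilon}$ to $\tfrac{1}{1+\epsilon}$. First I would control the fractional moment $\mathbb E|r_k|^{1+\epsilon}$ for $\epsilon\in(0,\alpha-1)$ using the characteristic function of $S_\alpha(0,\sigma,\mu_k)$ together with the identity
\begin{equation*}
\mathbb E|Y|^{p}\;=\;C_p^{-1}\int_0^{\infty}t^{-(1+p)}\bigl(1-\mathrm{Re}\,\phi_Y(t)\bigr)\,dt,\qquad 0<p<\alpha,
\end{equation*}
separating the bounded shift $\mu_k$ (with $|\mu_k|\le M$) from the scale contribution and evaluating the resulting Gamma integrals. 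Carrying this out is precisely what produces the constant $H(\epsilon,\alpha,\sigma)$ of~(\ref{eqn:truncated}); the factor $\Gamma\!\bigl(1-\tfrac{\epsilon+1}{\alpha}\bigr)$ appearing there makes it transparent that this bound---and hence every constant downstream---blows up as $\epsilon\uparrow\alpha-1$, reflecting the non-compactness of the set of finite moments.

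With $v:=H(\epsilon,\alpha,\sigma)\ge\mathbb E|r_k|^{1+\epsilon}$ fixed, the second and technically hardest step is to show that $\hat r^*_k(t)$ concentrates around $\mu_k$ at a sub-exponential rate. I would split $\hat r^*_k(t)-\mu_k$ into a truncation bias and a centered fluctuation. Because the $i$-th reward is discarded only when $|r_k^{(i)}|>(vi/2\log T)^{1/(1+\epsilon)}$, Markov's inequality applied to the $(1+\epsilon)$ moment bounds the $i$-th bias by $O\!\bigl(v\,(i/\log T)^{-\epsilon/(1+\epsilon)}\bigr)$, and summing over $i\le n_k(t)$ gives a total bias of order $v^{1/(1+\epsilon)}(\log T/n_k(t))^{\epsilon/(1+\epsilon)}$. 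For the fluctuation, each surviving summand is bounded by construction, so a Bernstein inequality---its variance proxy again controlled through $v$---followed by a union bound over the at most $T$ possible values of $n_k(t)$ yields, for a suitable $c_\epsilon$ depending only on $\epsilon$,
\begin{equation*}
\mathbb P\bigl(|\hat r^*_k(t)-\mu_k|\ge\beta_k(t)\bigr)\;\le\;\frac{2}{T^{2}},\qquad \beta_k(t):=c_\epsilon\,v^{\frac{1}{1+\epsilon}}\Bigl(\tfrac{\log T}{n_k(t)}\Bigr)^{\!\frac{\epsilon}{1+\epsilon}}.
\end{equation*}
This is the heavy-tailed analogue of the concentration lemmas behind~\cite{bubeck2013bandits}.

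Finally I would take the deterministic upper-confidence sequence $U_k(t)=\hat r^*_k(t)+\beta_k(t)$, which is measurable with respect to the history through round $t$. Since under Thompson Sampling the played arm $A_t$ and the random optimal arm $k^*$ share the same conditional law given the history---hence $\mathbb E[U_{k^*}(t)]=\mathbb E[U_{A_t}(t)]$---the decomposition of~\cite{russo2014learning} gives
\begin{equation*}
\mathrm{BayesRegret}(T,\pi^{RTS})=\mathbb E\Bigl[\textstyle\sum_{t\le T}\bigl(\mu_{k^*}-U_{k^*}(t)\bigr)\Bigr]+\mathbb E\Bigl[\textstyle\sum_{t\le T}\bigl(U_{A_t}(t)-\mu_{A_t}\bigr)\Bigr].
\end{equation*}
By Step~2 the first sum is nonpositive off a low-probability event on which $U_{k^*}(t)$---being an average of truncated rewards---is at most $\tilde O(T^{1/(1+\epsilon)})$, so the first term is $o(1)$. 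On the complementary good event the second sum is bounded by $\sum_{t\le T}2\beta_{A_t}(t)=2c_\epsilon v^{1/(1+\epsilon)}(\log T)^{\epsilon/(1+\epsilon)}\sum_k\sum_{n=1}^{n_k(T)}n^{-\epsilon/(1+\epsilon)}$ (the off-event contribution being negligible as before); using $\sum_{n\le m}n^{-\epsilon/(1+\epsilon)}=O(m^{1/(1+\epsilon)})$ and then $\sum_k n_k(T)^{1/(1+\epsilon)}\le K^{\epsilon/(1+\epsilon)}T^{1/(1+\epsilon)}$ by H\"older (concavity of $x\mapsto x^{1/(1+\epsilon)}$ with $\sum_k n_k(T)=T$) bounds this by $\tilde O\!\bigl(K^{\epsilon/(1+\epsilon)}T^{1/(1+\epsilon)}\bigr)$, which, since $\epsilon<1$, lies within $\tilde O((KT)^{1/(1+\epsilon)})$. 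Adding the two contributions gives the claim.

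The main obstacle is Step~2: one must choose the truncation threshold---equivalently, pin down $H(\epsilon,\alpha,\sigma)$ from Step~1---so that the truncation bias and the Bernstein fluctuation come out \emph{simultaneously} at the target order $(\log T/n_k(t))^{\epsilon/(1+\epsilon)}$, which is a delicate balancing of the moment constant against the confidence level; and one must verify that building the scale-mixture posterior of Algorithm~\ref{alg:robust_alpha_ts} on the truncated rewards rather than the raw ones perturbs the identity ``$A_t$ and $k^*$ are equidistributed given the history'' only by a total-variation error that is negligible over the horizon, since the truncation discards only low-probability outliers. The remainder is moment algebra and the standard confidence-bound bookkeeping.
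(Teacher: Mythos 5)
Your proposal follows essentially the same route as the paper's proof: a fractional-moment bound yielding the constant $H(\epsilon,\alpha,\sigma)$ (the paper cites Matsui--Takemura's explicit formula where you re-derive it from the characteristic function), the exponential concentration of the truncated mean (the paper invokes Lemma~1 of~\cite{bubeck2013bandits} as a black box where you sketch its bias-plus-Bernstein proof), and the Russo--Van Roy confidence-bound decomposition with a union bound, the integral comparison, and H\"older's inequality. The only substantive discrepancy is your claim that the off-event contribution is $o(1)$: without the clipping of $U_k(t)$ to $[-M,M]$ that the paper uses (which caps the bad-event term at $4KMT^2\delta = 4KM$ for $\delta = T^{-2}$), your accounting gives $\mathbb P(E^c)\cdot T\cdot\tilde O(T^{1/(1+\epsilon)}) = \tilde O(KT^{1/(1+\epsilon)})$, which is not $o(1)$ but is still absorbed into the claimed $\tilde O((KT)^{1/(1+\epsilon)})$, so the conclusion stands.
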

\textit{Proof-sketch.} The truncated mean estimator can be used to obtain an exponential concentration bound on the empirical mean. This can be understood intuitively as by rejecting values that are very far away from the mean, our empirical mean is robust to outliers, and would require less samples to concentrate around the true mean. Using the concentration result, we follow an analysis identical to Theorem~\ref{thm:alpha_ts}, with exponential concentration in the upper confidence bounds. The full proof is deferred to the Appendix for brevity.

We see that this bound is tight: when $\alpha = 2$ (Gaussian), $\epsilon$ can be set to $1$, and we see that this matches the upper bound of $O(\sqrt{KT\log T})$\cite{russo2014learning} for the Gaussian case, which has also been shown to be optimal~\cite{agrawal2013further}. Algorithm~\ref{alg:robust_alpha_ts} is hence more robust, with performance improvements increasing as $\alpha$ decreases: the likelihood of obtaining confounding outliers increases as $\alpha \rightarrow 1$, and can perturb the posterior mean in the naive $\alpha$-TS algorithm. In the next section, we discuss some experimental results that cement the value of $\alpha$-TS for $\alpha$-stable bandits.
\section{Experiments}
We conduct simulations with the following benchmarks -- (i) an $\varepsilon$-greedy agent with linearly decreasing $\varepsilon$, (ii) Regular TS with Gaussian priors and a Gaussian assumption on the data (Gaussian-TS), (iii) Robust-UCB~\cite{bubeck2013bandits} for heavy-tailed distributions using a truncated mean estimator, and (iv) $\alpha$-TS and (v) Robust $\alpha$-TS, both with $Q$(iterations for sampling) as $50$.\\

\textbf{Setting Priors for TS:} In all the Thompson Sampling benchmarks, setting the priors are crucial to the algorithm's performance. In the competitive benchmarking, we randomly set the priors for each arm from the same range we use for setting the mean rewards.

\subsection{Performance against Competitive Benchmarks}
\begin{figure*}[t]
\centering
\small
\includegraphics[width=\linewidth]{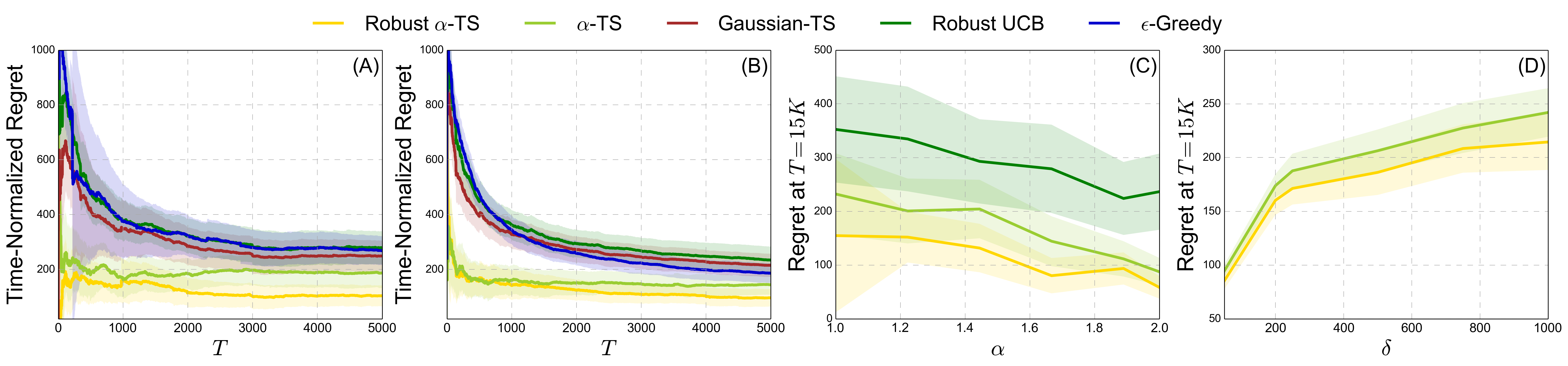}
\caption{(A) Competitive benchmarking for $\alpha=1.3$, and (B) $\alpha=1.8$; (C) Ablation studies for varying $\alpha$, and (D) varying prior strength. Shaded areas denote variance scaled by $0.25$ in (A) and (B), and scaled by $0.5$ in (C) and (D). Figures are best viewed in color.}
\label{fig:1}
\end{figure*}
We run 100 MAB experiments each for all 5 benchmarks for $\alpha=1.8$ and $\alpha=1.3$, and $K=50$ arms, and for each arm, the mean is drawn from $[0, 2000]$ randomly for each experiment, and $\sigma = 2500$. Each experiment is run for $T=5000$ iterations, and we report the regret averaged over time, i.e. $\mathbb R(t)/t$ at any time $t$. 

In Figures~\ref{fig:1}A and ~\ref{fig:1}B, we see the results of the regret averaged over all 100 experiments. We observe that for $\alpha=1.3$, there are more substantial variations in the regret (low $\alpha$ implies heavy outliers), yet both algorithms comfortably outperform the other baselines. 

In the case of $\alpha=1.8$, the variations are not that substantial, the performance follows the same trend. It is important to note that regular Thompson Sampling (Gaussian-TS) performs competitively, although in our experiments, we observed that when $K$ is large, the algorithm often concentrates on the incorrect arm, and subsequently earns a larger regret.

Intuitively, we can see that whenever arms have mean rewards close to each other (compared to the variance), $\epsilon$-greedy will often converge to the incorrect arm. Robust-UCB, however, makes very weak assumptions on the data distributions, and hence has a much larger exploration phase, leading to larger regret. Compared to $\alpha$-TS, Robust $\alpha$-TS is less affected by outlying rewards (as is more evident in $\alpha=1.3$ vs. $\alpha=1.8$) and hence converges faster.
\subsection{Ablation Studies}
We additionally run two types of ablation studies - first, we compare the performance of $\alpha$-TS on the identical set up as before (same $K$ and reward distributions), but with varying values of $\alpha \in (1, 2)$. We report the expected time-normalized regret averaged over 100 trials in Figure~\ref{fig:1}C, and observe that (i) as $\alpha$ increases, the asymptotic regret decreases faster, and (ii) as expected, for lower values of $\alpha$ there is a substantial amount of variation in the regret. 

Secondly, we compare the effect of the sharpness of the priors. In the previous experiments, the priors are drawn randomly from the same range as the means, without any additional information. However, by introducing more information about the means through the priors, we can expect better performance. In this experiment, for each mean $\mu_k$, we randomly draw the prior mean $\mu^0_k$ from $[\mu_k-\delta, \mu_k+\delta]$, and observe the regret after $T=15K$ trials for $\delta$ from $50$ to $1000$. The results for this experiment are summarized in Figure~\ref{fig:1}D for $K=10$ and $\sigma=25$, and results are averaged over 25 trials each. We see that with uninformative priors, $\alpha$-TS performs competitively, and only gets better as the priors get sharper.
\section{Related Work}
A version of the UCB algorithm~\cite{auer2002finite} has been proposed in~\cite{bubeck2013bandits} coupled with several robust mean estimators to obtain Robust-UCB algorithms with optimal \textit{problem-dependent} (i.e. dependent on individual $\mu_k$s) regret when rewards are heavy-tailed. However, the optimal version of their algorithm has a few shortcomings that $\alpha$-TS addresses: first, their algorithm requires the median-of-means estimator, which has a space complexity of $O(\log \delta^{-1})$ and time complexity of $O(\log \log \delta^{-1})$ per update, where $\delta$ is the confidence factor. This makes the algorithm expensive as the confidence is increased (for accurate prediction). Second, their regret bound has a dependence of $(\mu^*-\mu_k)^{1/(1-\alpha)}$, which can become arbitrarily large for a stable distribution with $\alpha$ close to 1. Finally, there is no mechanism to incorporate prior information, which can be advantageous, as seen even with weak priors.~\cite{vakili2013deterministic} introduce a deterministic exploration-exploitation algorithm, which achieves same order regret as Robust-UCB for heavy-tailed rewards.

There has been work in analysing Thompson Sampling for specific Pareto and Weibull heavy-tailed distributions in~\cite{korda2013thompson}, however, they do not provide an efficient posterior update rule and rely on approximate inference under the Jeffrey's prior. More importantly, the Weibull and Pareto distributions typically have ``lighter'' tails owing to the existence of more higher order moments, and hence cannot typically be used to model very heavy tailed signals.

In related problems,~\cite{yupure} provide a purely exploratory algorithm for best-arm identification under heavy-tailed rewards, using a finite $(1+\epsilon)^{th}$ moment assumption. Similarly,~\cite{shao2018almost, medina2016no} explore heavy-tailed rewards in the linear bandit setting. 
\section{Conclusion}
In this paper, we first designed a framework for efficient posterior inference for the $\alpha$-stable family, which has largely been ignored in the bandits literature owing to its intractable density function. We formulated the first polynomial problem-independent regret bounds for Thompson Sampling with $\alpha$-stable densities, and subsequently improved the regret bound to achieve the optimal regret identical to the sub-Gaussian case, providing an efficient framework for decision-making for these distributions.

Additionally, our intermediary concentration results provide a starting point for other machine learning problems that may be investigated in $\alpha$-stable settings. There is ample evidence to support the existence of $\alpha$-stability in various modeling problems across economics~\cite{frain2009studies}, finance~\cite{bradley2003financial} and behavioral studies~\cite{mahanti2013tale}. With tools such as ours, we hope to usher scientific conclusions in problems that cannot make sub-Gaussian assumptions, and can lead to more robust empirical findings. Future work may include viewing more involved decision-making processes, such as MDPs, in the same light, leading to more (distributionally) robust algorithms.\\

\textbf{Acknowledgements:} The authors would like to sincerely thank Peter Krafft, Yan Leng, Dhaval Adjodah, Ziv Epstein, Michiel Bakker, Ishaan Grover and Spandan Madan for their comments and suggestions for this manuscript, and MIT Quest for Intelligence for their generous support with computational resources. We also would like to acknowledge Tor Lattimore and Csaba Szepesvari's text on bandit algorithms~\cite{lattimore2018bandit} for their excellent treatment of bandit algorithms.
\nocite{shao1993signal}
\newpage
\setcounter{theorem}{0}
\setcounter{lemma}{0}
\setcounter{definition}{0}
\section*{Appendix}
\subsection*{Posterior Distribution of $\mu_k$}
At any time $t \in [T]$, arm $k \in [K]$, assume this arm has been pulled $n_k(t)$ times previously, and hence we have a vector of reward samples ${\bf r}_k(t) = \{ r_k^{(1)}, ..., r_k^{(n_k(t))}\} $ observed until time $t$. Additionally, assume we have $n_k(t)$ samples of an auxiliary variable ${\boldsymbol \lambda}_{k}(t) = \{ \lambda_{k}^{(1)}, ..., \lambda_{i}^{(n_k(t))}\}$ where $\lambda_k \sim S_{\alpha/2}(1, 1, 0)$. 
\begin{figure*}[h]
\centering
\small
\includegraphics[width=0.25\linewidth]{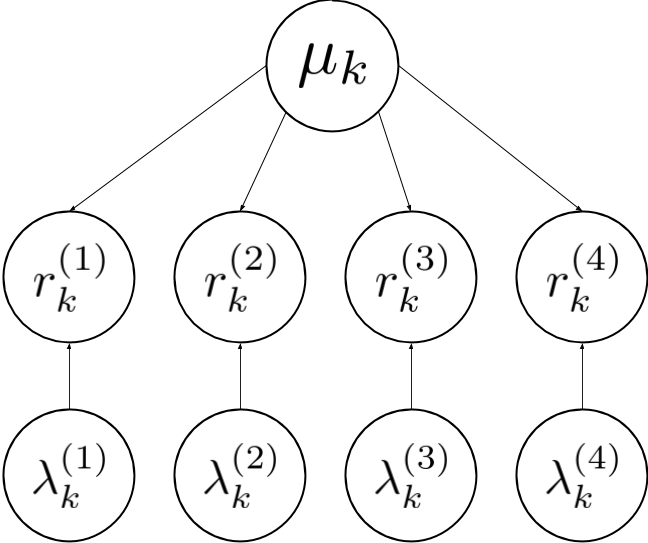}
\caption{Directed graphical model for arm $k$ where $n_k(t) = 4$.}
\label{fig:graphical_model}
\end{figure*}
Recall that $r_k \sim S_\alpha(0, \sigma, \mu_k)$ for an unknown (but fixed) $\mu_k$. From the SMiN representation, we know that $r_k$ is conditionally Gaussian given the auxiliary variable $\lambda_k$, that is $p(r_k | \lambda_k, \mu_k) \sim \mathcal N(\mu_k, \lambda_k\sigma^2)$. We can then obtain the conditional likelihood as
\begin{equation}
p({\bf r}_{k}(t) | {\boldsymbol \lambda}_{k}(t), \mu_k) \propto \exp\left(\frac{1}{2\sigma^2} \left(\sum_{i=1}^{n_k(t)} \frac{(r_k^{(i)} - \mu_k)^2}{\lambda_k^{(i)}}\right) \right).
\end{equation}
We can now assume a conjugate prior over $\mu_k$, which is a normal distribution with mean $\mu^0_k$ and variance $\sigma^2$. By Bayes' rule, we know that
\begin{align}
    p(\mu_k | {\boldsymbol \lambda}_{k}(t), {\bf r}_{k}(t)) &\propto p\left({\bf r}_{k}(t) | \mu_k, {\boldsymbol \lambda}_{k}(t)\left)p\right(\mu_k | {\boldsymbol \lambda}_{k}(t)\right) \\
    &\stackrel{(a)}{\propto} p({\bf r}_{k}(t) | \mu_k, {\boldsymbol \lambda}_{k}(t))p(\mu_k) \\
    &\propto \exp\left(-\frac{1}{2\sigma^2} \left(\sum_{i=1}^{n_k(t)} \frac{(r_k^{(i)} - \mu_k)^2}{\lambda_k^{(i)}}\right) \right) \cdot \exp\left(-\frac{(\mu_k - \mu_k^0)^2}{2\sigma^2} \right) \\
    &\propto \exp\left(-\frac{1}{2\sigma^2} \left(\sum_{i=1}^{n_k(t)} \frac{(r_k^{(i)} - \mu_k)^2}{\lambda_k^{(i)}} + (\mu_k - \mu_k^0)^2\right) \right)
\end{align}
\begin{align}
   p(\mu_k | {\boldsymbol \lambda}_{k}(t), {\bf r}_{k}(t)) &\propto \exp\left(-\frac{1}{2\sigma^2} \left(\mu_k^2\left(\sum_{i=1}^{n_k(t)}\frac{1}{\lambda_k^{(i)}} +1\right) -2\mu_k\left(\sum_{i=1}^{n_k(t)}\frac{r_k^{(i)}}{\lambda_k^{(i)}} + \mu_k^0\right)\right) \right) \\
   &\propto \exp\left(-\frac{1}{2\sigma^2\left(\sum_{i=1}^{n_k(t)}\frac{1}{\lambda_k^{(i)}} +1\right)^{-1}} \left(\mu_k^2 -2\mu_k\left(\frac{\left(\sum_{i=1}^{n_k(t)}\frac{r_k^{(i)}}{\lambda_k^{(i)}} + \mu_k^0\right)}{\left(\sum_{i=1}^{n_k(t)}\frac{1}{\lambda_k^{(i)}} +1\right)}\right)\right) \right) \\
   &\propto \exp\left(-\frac{\left(\mu_k - \frac{\left(\sum_{i=1}^{n_k(t)}\frac{r_k^{(i)}}{\lambda_k^{(i)}} + \mu_k^0\right)}{\left(\sum_{i=1}^{n_k(t)}\frac{1}{\lambda_k^{(i)}} +1\right)}\right)^2}{2\sigma^2\left(\sum_{i=1}^{n_k(t)}\frac{1}{\lambda_k^{(i)}} +1\right)^{-1}} \right).
\end{align}
Where $(a)$ follows from the independence of $\lambda_k^{(t)}$ and $\mu_k$. Comparing the above to a standard normal distribution, we then obtain the posterior density for $\mu_k$ as
\begin{multline}
    p(\mu_k | {\bf r}_{k}(t), {\boldsymbol \lambda}_{k}(t)) \propto \mathcal N\left(\hat\mu_k(t), \hat\sigma^2_{k}(t)\right) \text{ where, }
    \hat\sigma^2_{k}(t) = \frac{\sigma^2}{\sum_{i=1}^{n_k(t)} \frac{1}{\lambda_k^{(i)}} + 1},\ \hat\mu_{k}(t) = \frac{\sum_{i=1}^{n_k(t)} \frac{r_k^{(i)}}{\lambda_k^{(i)}}+\mu^0_k }{\sum_{i=1}^{n_k(t)} \frac{1}{\lambda_k^{(i)}} + 1}.
\end{multline}
\subsection*{Regret Analysis: $\alpha$-Thompson Sampling}
To derive a bound on the regret of $\alpha$-Thompson Sampling, we first state a few elementary properties of $\alpha$-stable distributions.
\begin{lemma}[\cite{borak2005stable}]
If $X \sim S_\alpha(0, \sigma_1, \mu_1)$ and $Y \sim S_\alpha(0, \sigma_2, \mu_2)$, then $X + Y \sim S_\alpha(0, \sigma, \mu)$, where,
\begin{equation*}
    \sigma  = \left(\sigma_1^\alpha + \sigma_2^\alpha\right)^{1/\alpha}, \mu = \mu_1 + \mu_2
\end{equation*}
\label{lemma:stable_sums}
\end{lemma}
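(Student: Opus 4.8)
The plan is to prove this via characteristic functions, since the paper has already supplied the characteristic function of $S_\alpha(\beta, \sigma, \mu)$ and this representation linearizes convolution. For the symmetric case $\beta = 0$, the general characteristic function collapses to the clean form $\phi(x; \alpha, 0, \sigma, \mu) = \exp\left(ix\mu - |\sigma x|^\alpha\right)$, since the $\beta$-dependent term vanishes. This is the only tool I need.

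First I would write down the characteristic functions of $X$ and $Y$ individually:
\begin{equation*}
\phi_X(x) = \exp\left(ix\mu_1 - |\sigma_1 x|^\alpha\right), \qquad \phi_Y(x) = \exp\left(ix\mu_2 - |\sigma_2 x|^\alpha\right).
\end{equation*}
Next, because $X$ and $Y$ are independent, the characteristic function of the sum factorizes: $\phi_{X+Y}(x) = \phi_X(x)\,\phi_Y(x)$. Multiplying and collecting terms in the exponent gives
\begin{equation*}
\phi_{X+Y}(x) = \exp\left(ix(\mu_1+\mu_2) - \left(|\sigma_1 x|^\alpha + |\sigma_2 x|^\alpha\right)\right).
\end{equation*}
The key algebraic step is then to factor $|x|^\alpha$ out of the magnitude terms, using $\sigma_1, \sigma_2 > 0$, so that $|\sigma_1 x|^\alpha + |\sigma_2 x|^\alpha = (\sigma_1^\alpha + \sigma_2^\alpha)|x|^\alpha = \left|(\sigma_1^\alpha+\sigma_2^\alpha)^{1/\alpha} x\right|^\alpha$. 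Setting $\sigma = (\sigma_1^\alpha + \sigma_2^\alpha)^{1/\alpha}$ and $\mu = \mu_1 + \mu_2$, this shows $\phi_{X+Y}(x) = \exp\left(ix\mu - |\sigma x|^\alpha\right)$, which is exactly the characteristic function of $S_\alpha(0, \sigma, \mu)$.

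Finally I would invoke the uniqueness theorem for characteristic functions (equivalently, injectivity of the Fourier transform on probability measures): since $X+Y$ and $S_\alpha(0, \sigma, \mu)$ share the same characteristic function, they have the same distribution, giving $X+Y \sim S_\alpha(0, \sigma, \mu)$ as claimed. Honestly, there is no serious obstacle in this argument; the entire proof is the one-line factorization of $|\sigma_1 x|^\alpha + |\sigma_2 x|^\alpha$ together with the factorization of the characteristic function under independence. The only point requiring a word of care is the appeal to uniqueness of characteristic functions to pass from equality of $\phi$ back to equality in distribution, and the positivity of the scales $\sigma_1, \sigma_2$ which ensures the $\alpha$-th root defining $\sigma$ is well-defined and real.
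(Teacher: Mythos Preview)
Your proof is correct. The paper itself does not prove this lemma; it is stated with a citation to \cite{borak2005stable} and used as a black-box property of symmetric $\alpha$-stable distributions, so there is no in-paper argument to compare against. Your characteristic-function computation is exactly the standard justification for this closure property and is entirely sound, with the independence assumption (implicit in the lemma's use throughout the paper, since it is applied to i.i.d.\ reward samples) being the only hypothesis you needed to make explicit.
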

\begin{lemma}[\cite{borak2005stable}]
If $X \sim S_\alpha(0, \sigma, \mu)$, then for $a \neq 0, b \in \mathbb R$,
\begin{equation*}
    aX + b \sim S_\alpha(0, |a|\sigma, a\mu + b)
\end{equation*}
\label{lemma:stable_scales}
\end{lemma}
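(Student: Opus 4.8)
The plan is to prove the identity at the level of characteristic functions and then invoke the uniqueness theorem for characteristic functions, which is the cleanest route given that the density of $S_\alpha(0,\sigma,\mu)$ admits no closed form while its characteristic function does. Since the lemma concerns the symmetric case $\beta = 0$, I would begin from the specialization of the characteristic function stated in the preliminaries, namely $\phi_X(x) = \exp\{ix\mu - |\sigma x|^\alpha\}$, in which the skewness term $i\beta\sign(x)\Phi_\alpha(x)$ vanishes identically.

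First I would compute the characteristic function of $Y = aX + b$ from that of $X$ via the elementary affine rule $\phi_{aX+b}(x) = e^{ibx}\phi_X(ax)$, which follows immediately from $\mathbb{E}[e^{ix(aX+b)}] = e^{ibx}\,\mathbb{E}[e^{i(ax)X}]$. Substituting the symmetric characteristic function yields $\phi_Y(x) = \exp\{ibx + i(ax)\mu - |\sigma a x|^\alpha\} = \exp\{ix(a\mu + b) - |\sigma a x|^\alpha\}$.

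Next I would rewrite the scale term. Since $\sigma > 0$, the template scale factor is $|\sigma x|^\alpha = \sigma^\alpha|x|^\alpha$, whereas here $|\sigma a x|^\alpha = |a|^\alpha\sigma^\alpha|x|^\alpha$. Hence $\phi_Y(x) = \exp\{ix(a\mu + b) - |a|^\alpha\sigma^\alpha|x|^\alpha\}$, and matching against the symmetric form $\exp\{ix\mu' - (\sigma')^\alpha|x|^\alpha\}$ gives $\mu' = a\mu + b$ and $(\sigma')^\alpha = |a|^\alpha\sigma^\alpha$, i.e. $\sigma' = |a|\sigma$. By uniqueness of characteristic functions, $Y \sim S_\alpha(0, |a|\sigma, a\mu + b)$, which is the claim.

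The only step needing care --- and the one I would flag as the main (if minor) obstacle --- is the treatment of $a < 0$. For negative $a$ the argument $ax$ reverses sign, and in the fully general characteristic function the factor $\sign(ax) = -\sign(x)$ would flip the sign of the skewness contribution, so an honest proof of the asymmetric ($\beta \neq 0$) statement would have to track $\sign(a)$ and how it transforms the skewness parameter. Here, however, symmetry ($\beta = 0$) annihilates that term, so the sign of $a$ can enter only through $|\sigma a x|^\alpha$, which depends on $|a|$ alone, and through the location shift $a\mu$, which retains its sign. This is precisely why the scale transforms with $|a|$ whereas the location transforms with $a$, and it confirms that no separate case analysis on $\sign(a)$ is required in the symmetric setting.
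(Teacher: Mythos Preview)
Your proof is correct. The paper does not supply its own proof of this lemma: it is stated as a cited fact from \cite{borak2005stable} and used directly in the regret analysis (e.g., in Lemma~\ref{lemma:stable_average} and Lemma~\ref{lemma:stable_concentration}). Your characteristic-function argument is the standard route and is precisely what one would expect to find in the cited reference; your care in noting why the $\beta = 0$ hypothesis makes the sign of $a$ irrelevant except through $|a|$ in the scale is apt and would indeed be required if one wanted the general $\beta \neq 0$ version.
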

By composing these two results, we can obtain a representation for the empirical mean of a sequence of i.i.d. samples from an $\alpha$-stable distribution, as described next.
\begin{lemma}
Let $X_1, X_2, ..., X_n$ be an i.i.d. sequence of $n$ random variables following $S_\alpha(0, \sigma, \mu)$, and $\bar X$ denote the empirical mean:
\begin{equation*}
    \bar X = \frac{1}{n}\sum_{i=1}^n X_i
\end{equation*}
Then $\bar X \sim S_\alpha(0, \sigma^*, \mu^*)$, where,
\begin{equation*}
    \sigma^* = \sigma n^{(\frac{1}{\alpha}-1)}, \mu^* = \mu
\end{equation*}
\label{lemma:stable_average}
\end{lemma}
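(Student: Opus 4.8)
The plan is to apply the two preceding closure lemmas in sequence. First I would use Lemma~\ref{lemma:stable_scales} (scaling) to understand how each rescaled summand $X_i/n$ is distributed: since $X_i \sim S_\alpha(0, \sigma, \mu)$ and $1/n \neq 0$, we get $X_i/n \sim S_\alpha(0, \sigma/n, \mu/n)$. Then I would iteratively apply Lemma~\ref{lemma:stable_sums} (sums) to the $n$ independent terms $X_1/n, \dots, X_n/n$. Because the shift parameters simply add, the resulting shift is $n \cdot (\mu/n) = \mu$, giving $\mu^* = \mu$ as claimed. Because the scale parameters combine as $(\sum_i (\sigma/n)^\alpha)^{1/\alpha}$, the resulting scale is $(n \cdot \sigma^\alpha / n^\alpha)^{1/\alpha} = \sigma \cdot n^{1/\alpha} \cdot n^{-1} = \sigma n^{1/\alpha - 1}$, which is exactly $\sigma^*$.

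The only mild subtlety is that Lemma~\ref{lemma:stable_sums} as stated is a statement about the sum of \emph{two} independent stable variables with the same exponent $\alpha$, so to handle an $n$-fold sum I would formalize the induction: the sum of the first $j$ rescaled terms is $S_\alpha(0, (\sigma/n)(j^{1/\alpha}), j\mu/n)$, and adding the $(j{+}1)$-th term $X_{j+1}/n \sim S_\alpha(0, \sigma/n, \mu/n)$ yields, by Lemma~\ref{lemma:stable_sums}, scale $((\sigma/n)^\alpha j + (\sigma/n)^\alpha)^{1/\alpha} = (\sigma/n)(j+1)^{1/\alpha}$ and shift $(j{+}1)\mu/n$. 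This closes the induction and, at $j = n$, delivers the claim. Independence of the $X_i/n$ follows from independence of the $X_i$, which is needed to invoke the sum lemma at each step.

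I do not anticipate a genuine obstacle here — this is a routine composition of the stated closure properties — but if there is one place to be careful it is the bookkeeping of the exponent $1/\alpha$ versus $\alpha$ in the scale-combination rule, so I would double-check that $(\sum_{i=1}^n c^\alpha)^{1/\alpha} = c n^{1/\alpha}$ and that the subsequent factor of $n^{-1}$ from the rescaling combines to give the exponent $1/\alpha - 1 < 0$ (consistent with the scale shrinking as $n$ grows, matching the Gaussian $\alpha = 2$ case where $\sigma^* = \sigma n^{-1/2}$).
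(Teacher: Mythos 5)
Your proposal is correct and matches the paper's argument in substance: both proofs are a routine composition of Lemma~\ref{lemma:stable_sums} and Lemma~\ref{lemma:stable_scales}, with the paper summing first and then rescaling by $1/n$ while you rescale each term first and then sum (and spell out the two-term-to-$n$-term induction). The resulting parameters $\sigma^* = \sigma n^{1/\alpha - 1}$ and $\mu^* = \mu$ agree in both routes.
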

\begin{proof}
Let $Y = \sum_{i=1}^n X_i$. Then $Y \sim S_\alpha(\sigma n^{1/\alpha}, \mu)$, by Lemma~\ref{lemma:stable_sums}. Then, $\bar X  = Y/n$, and the result follows from Lemma~\ref{lemma:stable_scales}.
\end{proof}
Next, we state an important result that allows us to obtain the moments of symmetric, zero-mean $\alpha$-stable distributions.
\begin{lemma}[Theorem 4 of \cite{shao1993signal}]
For $X \sim S_\alpha(0, \sigma, 0), p \in (0, \alpha)$,
\begin{equation*}
    \mathbb E[|X|^p] = C(p, \alpha)\sigma^{\frac{p}{\alpha}} \text{ where, } C(p, \alpha) = \frac{2^{p+1} \Gamma(\frac{p+1}{2})\Gamma(-p/\alpha)}{\alpha \sqrt{\pi} \Gamma(-p/2)}, \text{and } \Gamma(x) = \int_0^{\infty} t^{x-1}e^{-t}dt.
\end{equation*}
\label{lemma:stable_expectation}
\end{lemma}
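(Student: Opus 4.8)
\textit{Proof proposal.} The plan is to obtain the formula directly from the characteristic function, which for a symmetric zero-mean $\alpha$-stable law is real and even; writing the dispersion in the exponent as $\sigma$ (the convention of Shao--Nikias, so that $\phi_X(u) = \exp(-\sigma|u|^\alpha)$), the computation will produce the factor $\sigma^{p/\alpha}$. The starting point is the classical integral representation of fractional absolute moments: for any $p \in (0,2)$ and any real $x$,
\[
  |x|^p \;=\; \frac{1}{C_p}\int_0^\infty \frac{1 - \cos(xu)}{u^{1+p}}\,du,
  \qquad C_p := \int_0^\infty \frac{1-\cos v}{v^{1+p}}\,dv ,
\]
which follows from the substitution $v = |x|u$ and the evenness of $\cos$. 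Since the integrand is nonnegative, Tonelli's theorem lets us take expectations inside, and using $\mathbb E[\cos(Xu)] = \operatorname{Re}\phi_X(u)$,
\[
  \mathbb E[|X|^p] \;=\; \frac{1}{C_p}\int_0^\infty \frac{1 - \operatorname{Re}\phi_X(u)}{u^{1+p}}\,du
  \;=\; \frac{1}{C_p}\int_0^\infty \frac{1 - e^{-\sigma u^\alpha}}{u^{1+p}}\,du .
\]
A first sanity check: near $u=0$ the integrand behaves like $\sigma u^{\alpha-1-p}$ and near $u=\infty$ like $u^{-1-p}$, so the integral converges precisely when $0 < p < \alpha$, in agreement with Fact~\ref{fact:moment}.

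The remaining work is to evaluate the two integrals in closed form. For the numerator integral I would substitute $v = \sigma u^\alpha$, which peels off the scale as $\sigma^{p/\alpha}$ and reduces it to $\tfrac{1}{\alpha}\int_0^\infty (1-e^{-v})\,v^{-p/\alpha-1}\,dv$; an integration by parts (valid since $p/\alpha\in(0,1)$) turns this into $\tfrac{1}{\alpha}\cdot\tfrac{\Gamma(1-p/\alpha)}{p/\alpha} = -\tfrac{1}{\alpha}\Gamma(-p/\alpha)$. For the denominator I would use the standard Mellin-type evaluation $\int_0^\infty v^{s-1}(1-\cos v)\,dv = -\Gamma(s)\cos(\pi s/2)$ for $-2<s<0$ (itself obtainable by one integration by parts and $\int_0^\infty v^{a-1}\sin v\,dv = \Gamma(a)\sin(\pi a/2)$), at $s=-p$, giving $C_p = -\Gamma(-p)\cos(\pi p/2)$. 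Assembling the pieces yields
\[
  \mathbb E[|X|^p] \;=\; \frac{\Gamma(-p/\alpha)}{\alpha\,\Gamma(-p)\cos(\pi p/2)}\,\sigma^{p/\alpha},
\]
where one checks the quotient is positive because $\Gamma(-p/\alpha)<0$ and $\Gamma(-p)\cos(\pi p/2)>0$ on the relevant range of $p$.

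Finally I would reconcile this constant with the stated $C(p,\alpha)$ using two gamma identities: the Legendre duplication formula $\Gamma(z)\Gamma(z+\tfrac12)=2^{1-2z}\sqrt\pi\,\Gamma(2z)$ at $z=-p/2$ (rewriting $\Gamma(-p)$ via $\Gamma(-p/2)$ and $\Gamma(\tfrac{1-p}{2})$) and the reflection formula $\Gamma(\tfrac{1-p}{2})\Gamma(\tfrac{1+p}{2})=\pi/\cos(\pi p/2)$ (eliminating $\Gamma(\tfrac{1-p}{2})$ and the stray $\cos(\pi p/2)$). These collapse $\big(\Gamma(-p)\cos(\pi p/2)\big)^{-1}$ to $2^{p+1}\Gamma(\tfrac{p+1}{2})/(\sqrt\pi\,\Gamma(-p/2))$, which is exactly what is needed to read off $\mathbb E[|X|^p]=C(p,\alpha)\sigma^{p/\alpha}$. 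The analytic content — the integral representation and the interchange of expectation and integral — is routine; the main obstacle is purely this last bookkeeping, i.e.\ tracking the signs of the gamma factors and invoking duplication and reflection in the right order. As a correctness check I would specialize to $\alpha=2$, where $C(p,2)=2^p\Gamma(\tfrac{p+1}{2})/\sqrt\pi$ should reproduce the familiar absolute-moment formula for a normal variable.
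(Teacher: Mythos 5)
Your derivation is correct, and it is worth noting that the paper itself offers no proof of this lemma at all: it is imported verbatim as Theorem~4 of the cited Shao--Nikias reference. So there is nothing to compare against except the source, and your argument is the standard one (the fractional-moment integral representation $|x|^p = C_p^{-1}\int_0^\infty (1-\cos(xu))u^{-1-p}\,du$, Tonelli, and the Mellin evaluations), carried out correctly: the substitution $v=\sigma u^\alpha$ does extract $\sigma^{p/\alpha}$ and leaves $-\Gamma(-p/\alpha)/\alpha$, the constant $C_p=-\Gamma(-p)\cos(\pi p/2)$ is right, and the duplication formula at $z=-p/2$ together with the reflection formula $\Gamma(\tfrac{1-p}{2})\Gamma(\tfrac{1+p}{2})=\pi/\cos(\pi p/2)$ does collapse $\bigl(\Gamma(-p)\cos(\pi p/2)\bigr)^{-1}$ to $2^{p+1}\Gamma(\tfrac{p+1}{2})/\bigl(\sqrt{\pi}\,\Gamma(-p/2)\bigr)$; the $\alpha=2$ check also comes out right. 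You have also correctly diagnosed the dispersion-versus-scale convention mismatch that makes the exponent $\sigma^{p/\alpha}$ rather than $\sigma^{p}$. Two small blemishes, neither fatal: (i) in your positivity remark you assert $\Gamma(-p)\cos(\pi p/2)>0$, but in fact this product is \emph{negative} throughout $p\in(0,2)$ (it equals $-C_p$ with $C_p>0$); the quotient is positive because numerator and denominator are both negative, so your conclusion stands but the stated reason is internally inconsistent. (ii) At $p=1$ the intermediate expression $\Gamma(-p)\cos(\pi p/2)$ is a $\infty\cdot 0$ form and must be read as a limit (or one restricts to $p\neq 1$ and extends by continuity); the final constant $C(p,\alpha)$ is perfectly regular there since $\Gamma(-1/2)=-2\sqrt{\pi}$.
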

The next two results allow us to obtain a concentration bound of the empirical of a fininte number of samples of a symmetric $\alpha$-stable distribution.
\begin{lemma}[Lemma 3 of~\cite{bubeck2013bandits}]
Let $X_1, ..., X_n$ be an i.i.d. sequence of random variables with finite mean $\mu$, finite $(1+\epsilon)$ centered moments $\mathbb E[|X-\mu|^{1+\epsilon}] \leq v_\epsilon$ and finite raw moments $\mathbb E[|X|^{1+\epsilon}] \leq u_\epsilon$, for $\epsilon \in (0, 1]$. Let $\hat{\mu}$ denote the empirical mean:
\begin{equation*}
    \hat\mu = \frac{1}{n}\sum_{i=1}^n X_i
\end{equation*}
Then, for any $\delta \in (0, 1)$, we have, with probability at least $1-\delta$,
\begin{equation*}
    \hat\mu \leq \mu + \left(\frac{3v_\epsilon}{\delta n^\epsilon}\right)^{\frac{1}{1+\epsilon}}
\end{equation*}
\label{lemma:concentration}
\end{lemma}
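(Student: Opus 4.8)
\textit{Proof proposal.} Since the statement concerns only the upper tail of $\hat\mu$, the plan is to reduce it to a single moment estimate for the centered sum and then apply Markov's inequality. Write $Y_i = X_i - \mu$, so that the $Y_i$ are i.i.d.\ with mean zero and $\mathbb E[|Y_i|^{1+\epsilon}] \le v_\epsilon$; put $p = 1+\epsilon \in (1,2]$ and $S_n = \sum_{i=1}^n Y_i$, so that $\hat\mu - \mu = S_n/n$.

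\textit{Step 1: a moment bound for the sum.} The key point is that, although the summands possess only a $(1+\epsilon)$-th moment, one still has a bound \emph{linear} in $n$, namely $\mathbb E\!\left[|S_n|^{1+\epsilon}\right] \le 2\,n\,v_\epsilon$. When $p=2$ this is merely additivity of variance; for $p \in (1,2)$ it is the von Bahr--Esseen inequality $\mathbb E[|S_n|^{p}] \le 2\sum_{i=1}^n \mathbb E[|Y_i|^p]$, which is proved by induction on $n$, conditioning on the partial sum $S_{n-1}$ and using that $Y_n$ has mean zero and is independent of $S_{n-1}$ to discard the first-order increment. Since the excerpt already invokes \cite{bubeck2013bandits} freely, this step could equally well be quoted verbatim; I would include the short induction for completeness.

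\textit{Step 2: Markov and the choice of threshold.} For any $s > 0$,
\begin{align*}
\mathbb P\!\left(\hat\mu - \mu > s\right)
  &= \mathbb P\!\left(S_n > n s\right)
   \le \mathbb P\!\left(|S_n| > n s\right) \\
  &\le \frac{\mathbb E\!\left[|S_n|^{1+\epsilon}\right]}{(n s)^{1+\epsilon}}
   \le \frac{2\,n\,v_\epsilon}{(n s)^{1+\epsilon}}
   = \frac{2\,v_\epsilon}{n^{\epsilon}\,s^{1+\epsilon}}.
\end{align*}
Choosing $s = \bigl(2 v_\epsilon /(\delta n^\epsilon)\bigr)^{1/(1+\epsilon)}$ makes the right-hand side at most $\delta$; since this value is no larger than $\bigl(3 v_\epsilon /(\delta n^\epsilon)\bigr)^{1/(1+\epsilon)}$, the stated bound holds on the complementary event, which has probability at least $1-\delta$. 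The constant $3$ in the statement simply absorbs slack: for instance, a symmetrization argument (coupling with an independent copy, then using the Khintchine bound $\mathbb E|\sum_i \varepsilon_i a_i|^p \le (\sum_i a_i^2)^{p/2}$ for $p \le 2$ together with subadditivity of $t \mapsto t^{p/2}$) gives $\mathbb E|S_n|^{1+\epsilon} \le 2^{1+\epsilon}\, n\, v_\epsilon$, still comfortably within the stated constant.

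\textit{Main obstacle.} The only genuinely nontrivial ingredient is Step 1 --- obtaining a linear-in-$n$ control of $\mathbb E|S_n|^{1+\epsilon}$ from only a $(1+\epsilon)$-th moment. For $p=2$ the cross terms cancel exactly; for $p<2$ there is no such identity, so one must route through von Bahr--Esseen (or the symmetrization/Khintchine argument above). Everything after Step 1 is one application of Markov's inequality plus elementary algebra, and the hypothesis $\mathbb E|Y_i|^{1+\epsilon} \le v_\epsilon < \infty$ is exactly what is needed --- for the $\alpha$-stable rewards of interest it holds whenever $\epsilon < \alpha-1$ by Fact~\ref{fact:moment}.
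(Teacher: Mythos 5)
Your proof is correct, and it matches the standard argument: the paper itself imports this lemma from \cite{bubeck2013bandits} without proof, and the proof in that reference is exactly your route --- a von Bahr--Esseen bound $\mathbb E|S_n|^{1+\epsilon}\le 2\sum_i\mathbb E|Y_i|^{1+\epsilon}$ followed by Markov's inequality, with the constant $3$ absorbing the slack from $2$. One minor caveat on your closing aside: the symmetrization/Khintchine variant yields $\mathbb E|S_n|^{1+\epsilon}\le 2^{1+\epsilon}n v_\epsilon$, and $2^{1+\epsilon}$ exceeds $3$ as $\epsilon\to 1$, so that alternative would \emph{not} stay within the stated constant; the von Bahr--Esseen route you actually use is the one that does.
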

\begin{lemma}
Let $X_1, ..., X_n$ be an i.i.d. sequence of random variables following $S_\alpha(0, \sigma, \mu)$ for $\alpha \in (1, 2)$. Let $\hat{\mu}$ denote the empirical mean:
\begin{equation*}
    \hat\mu = \frac{1}{n}\sum_{i=1}^n X_i
\end{equation*}
Then, for any $\delta \in (0, 1)$ and $\epsilon \in (0, \alpha - 1)$, we have, with probability at least $1-\delta$,
\begin{equation*}
    |\hat\mu - \mu| \leq  \sigma^{\frac{1}{\alpha}} \left(\frac{2C(1+\epsilon, \alpha)}{\delta n_k(t)^\epsilon}\right)^{\frac{1}{1+\epsilon}}
\end{equation*}
\label{lemma:stable_concentration}
\end{lemma}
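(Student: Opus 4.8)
\textit{Proof proposal.} The plan is to reduce the statement to the generic heavy-tailed concentration bound of Lemma~\ref{lemma:concentration}, the only real work being to supply an explicit bound on the centered $(1+\epsilon)$-th moment of a symmetric $\alpha$-stable law. First I would center the samples: writing $n = n_k(t)$ and applying Lemma~\ref{lemma:stable_scales} with $a=1,\ b=-\mu$, each $X_i - \mu$ follows $S_\alpha(0,\sigma,0)$, a symmetric zero-mean $\alpha$-stable variable. Since $\epsilon \in (0,\alpha-1)$ we have $p := 1+\epsilon \in (1,\alpha)$, so Fact~\ref{fact:moment} guarantees this moment is finite and Lemma~\ref{lemma:stable_expectation} evaluates it exactly, $\mathbb{E}\big[|X_i-\mu|^{1+\epsilon}\big] = C(1+\epsilon,\alpha)\,\sigma^{(1+\epsilon)/\alpha} =: v_\epsilon$; the raw moment $\mathbb{E}[|X_i|^{1+\epsilon}]$ is finite for the same reason (or by $|X_i|^{1+\epsilon}\le 2^{\epsilon}(|X_i-\mu|^{1+\epsilon}+|\mu|^{1+\epsilon})$ together with $|\mu|\le M$), so all hypotheses of Lemma~\ref{lemma:concentration} hold.

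Next I would invoke Lemma~\ref{lemma:concentration} on $X_1,\dots,X_n$ with this $v_\epsilon$. Substituting $v_\epsilon = C(1+\epsilon,\alpha)\sigma^{(1+\epsilon)/\alpha}$ and pulling the factor $\sigma^{(1+\epsilon)/\alpha}$ out of the $(1+\epsilon)$-th root gives, with probability at least $1-\delta$,
\[
\hat\mu - \mu \;\le\; \sigma^{1/\alpha}\left(\frac{3\,C(1+\epsilon,\alpha)}{\delta\, n^{\epsilon}}\right)^{\frac{1}{1+\epsilon}}.
\]
To turn this one-sided statement into the two-sided bound in the claim, I would use the symmetry of the centered law: $-X_1,\dots,-X_n$ is again i.i.d.\ $S_\alpha(0,\sigma,-\mu)$ with empirical mean $-\hat\mu$, so the identical argument bounds $\mu - \hat\mu$ by the same quantity, and the two tails combine into a bound on $|\hat\mu-\mu|$. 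Because the centered distribution is symmetric, the truncation step underlying Lemma~\ref{lemma:concentration} contributes no bias term, which is what lets the numerical constant be tightened from $3$ to $2$ (equivalently, one reruns the proof of Lemma~\ref{lemma:concentration} directly for $|\hat\mu-\mu|$, using symmetry); this yields exactly $|\hat\mu-\mu| \le \sigma^{1/\alpha}\big(2C(1+\epsilon,\alpha)/(\delta\, n^{\epsilon})\big)^{1/(1+\epsilon)}$.

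The only genuinely delicate points are bookkeeping ones. One must keep $p = 1+\epsilon$ strictly below $\alpha$ so that Lemma~\ref{lemma:stable_expectation} applies and $C(1+\epsilon,\alpha)$ stays finite — the factor $\Gamma(-p/\alpha)$ has a pole as $p\to\alpha$, which is precisely the analytic mechanism by which the constants in Theorem~\ref{thm:alpha_ts} diverge as $\epsilon\to(\alpha-1)^-$ — and one must make sure the scale parametrization is used consistently between the centered variable and the moment formula; everything past that is substitution. As a sanity check, the resulting $n^{-\epsilon/(1+\epsilon)}$ rate (coming from the $n^{-\epsilon}$ inside the root) is exactly the weak, polynomial-in-$n$ concentration that downstream forces the polynomial regret of $\alpha$-TS, and it is the step that the truncated-mean estimator replaces with an exponential tail in Robust $\alpha$-TS.
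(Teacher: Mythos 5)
Your proposal is correct and follows essentially the same route as the paper: center via Lemma~\ref{lemma:stable_scales}, evaluate $v_\epsilon = C(1+\epsilon,\alpha)\sigma^{(1+\epsilon)/\alpha}$ with Lemma~\ref{lemma:stable_expectation}, note finiteness of the raw $(1+\epsilon)$-moment, and apply Lemma~\ref{lemma:concentration} to both tails. The only divergence is your symmetry argument for tightening the constant to $2$, which is no less rigorous than the paper's own treatment (a literal two-tail union bound at level $\delta/2$ would actually give $6$ rather than $2$ inside the root, a constant-level discrepancy present in the paper as well).
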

\begin{proof}
From Lemma~\ref{lemma:stable_scales}, we know that if $X \sim S_\alpha(0, \sigma, \mu)$ then $X-\mu \sim S_\alpha(0, \sigma, 0)$. Applying this in Lemma~\ref{lemma:stable_expectation}, we get  $v_\epsilon = C(1+\epsilon, \alpha)\sigma^{\frac{1+\epsilon}{\alpha}}$. Note that the raw moments of order $1+\epsilon < \alpha$ are also finite for $\alpha$-stable densities, therefore $u_\epsilon < \infty \ \forall \epsilon \in (0, \alpha-1)$ and Lemma~\ref{lemma:concentration} can be applied. Thus, an application of Lemma~\ref{lemma:concentration} for both tails with probability $\delta/2$ gets the desired result.
\end{proof}
A few remarks are in order. First, we note that this bound is tight, and while this introduces polynomial deviations for the empirical mean, it is not an artefact of the proof method, but the heaviness of the tails themselves~\cite{bubeck2013bandits}.

Using this concentration result, we can now proceed to proving the regret bound. Our analysis is inspired by the sub-Gaussian case analysed in~\cite{russo2014learning}, and we prove a few important results within the main proof.
\begin{theorem}[Regret Bound]
Let $K> 1, \alpha \in (1, 2), \sigma \in \mathbb R^+, \mu_{i: i \in [K]} \in \mathbb [-M, M]$. For a $K$-armed bandit with rewards for each arm drawn from $S_\alpha(0, \sigma, \mu_i)$, we have, for finite-time $T$:
\begin{equation*}
    \text{BR}(T, \pi^{TS}) \leq \inf_{\epsilon < \alpha -1} \left\{2KMT + 2\left(g(\epsilon, \alpha)\right)^{\frac{1}{1+\epsilon}} K^{\frac{1}{1+\epsilon}}T^{\frac{1+\epsilon}{1+2\epsilon}}\right\}
\end{equation*}
Asymptotically, for $\epsilon$ chosen a priori such that $\epsilon \rightarrow (\alpha - 1)^-$, 
\begin{equation*}
    \text{BayesRegret}(T, \pi^{TS}) = O(K^{\frac{1}{1+\epsilon}}T^{\frac{1+\epsilon}{1+2\epsilon}})
\end{equation*}
\label{theorem:regret}
\end{theorem}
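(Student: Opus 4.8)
The plan is to follow the information-theoretic/confidence-set decomposition of the Bayesian regret pioneered by Russo and Van Roy~\cite{russo2014learning}, specializing every concentration step to the $\alpha$-stable tail behavior captured by Lemma~\ref{lemma:stable_concentration}. First I would fix a horizon $T$ and, for each arm $k$ and each round $t$, define an upper confidence bound
\[
U_k(t) = \hat\mu_k(t) + \sigma^{1/\alpha}\left(\frac{2C(1+\epsilon,\alpha)\,\log T}{n_k(t)^{\epsilon}}\right)^{\frac{1}{1+\epsilon}}
\]
(and a matching lower bound $L_k(t)$), where $\hat\mu_k(t)$ is the empirical mean of the $n_k(t)$ observed rewards. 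By Lemma~\ref{lemma:stable_concentration} applied with $\delta = 1/T^{\text{(something)}}$, each of the events $\{\mu_k \le U_k(t)\}$ and $\{\bar\mu_k(t) \le U_k(t)\}$ (the latter because the posterior of $\mu_k$ is Gaussian with a comparable spread, so a Gaussian tail bound applies) holds with probability $1 - O(1/T^2)$ or so, uniformly over the $KT$ arm-round pairs after a union bound; on the rare complement we pay at most the trivial per-step regret $2M$ times $T$, which is where the $2KMT$-type term in the finite-time statement comes from (and it is lower-order once the main term dominates).

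The core of the argument is the standard identity that, because $a_t = \argmax_k \bar\mu_k(t)$ and $U_k(t)$ is a deterministic function of the history, the posterior-expected instantaneous regret satisfies $\mathbb E[\mu_{k^*} - \mu_{a_t} \mid \mathcal H_{t-1}] = \mathbb E[U_{a_t}(t) - \mu_{a_t} \mid \mathcal H_{t-1}]$ (the "probability-matching" equality: $\mathbb E[U_{k^*}(t)\mid\mathcal H_{t-1}] = \mathbb E[U_{a_t}(t)\mid\mathcal H_{t-1}]$). Hence the Bayesian regret is bounded by $\mathbb E\sum_{t=1}^T (U_{a_t}(t) - L_{a_t}(t))$ up to the failure term, and $U_{a_t}(t) - L_{a_t}(t) = 2\sigma^{1/\alpha}(2C(1+\epsilon,\alpha)\log T)^{1/(1+\epsilon)}\, n_{a_t}(t)^{-\epsilon/(1+\epsilon)}$. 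Then I would bound $\sum_{t=1}^T n_{a_t}(t)^{-\epsilon/(1+\epsilon)} = \sum_k \sum_{j=1}^{n_k(T)} j^{-\epsilon/(1+\epsilon)} \le \sum_k \frac{1+\epsilon}{1}\, n_k(T)^{1/(1+\epsilon)}$ (integral bound, since the exponent $\epsilon/(1+\epsilon) < 1$), and finally apply Hölder/Jensen over $k$ with $\sum_k n_k(T) = T$ to get $\sum_k n_k(T)^{1/(1+\epsilon)} \le K^{1 - 1/(1+\epsilon)} T^{1/(1+\epsilon)} = K^{\epsilon/(1+\epsilon)} T^{1/(1+\epsilon)}$. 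Collecting constants into $g(\epsilon,\alpha)$ (essentially $C(1+\epsilon,\alpha)$, $\sigma$, and the $\log T$ factor, with an extra power of $K$ split off) yields the claimed $K^{1/(1+\epsilon)} T^{(1+\epsilon)/(1+2\epsilon)}$ — the $T$-exponent coming out as stated if one re-optimizes the choice of confidence level / a second application of the counting bound; I would double-check that the exponent bookkeeping produces $(1+\epsilon)/(1+2\epsilon)$ rather than $1/(1+\epsilon)$, since the two differ and the former is what the theorem asserts.

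The main obstacle I anticipate is precisely this exponent bookkeeping together with making the constant $g(\epsilon,\alpha)$ honest: the polynomial (rather than exponential) concentration in Lemma~\ref{lemma:stable_concentration} means the confidence radius shrinks only like $n^{-\epsilon/(1+\epsilon)}$, so the naive sum gives $T^{1/(1+\epsilon)}$ per arm, and one must be careful about whether the $\log T$ inflation of $\delta$ is affordable (with polynomial tails, a union bound over $T$ rounds costs a factor $T$ inside the radius, i.e. a multiplicative $T^{1/(1+\epsilon)}$, which is catastrophic unless handled by a peeling/doubling argument on $n_k(t)$ instead of a crude union bound). Reconciling that with the advertised $T^{(1+\epsilon)/(1+2\epsilon)}$ exponent is the delicate point, and I expect the proof balances the confidence-failure probability against the summed radius by optimizing a free parameter, which is also what forces the constants to blow up as $\epsilon \to (\alpha-1)^-$ (since $C(1+\epsilon,\alpha) = C(\alpha,\alpha)$ involves $\Gamma(-1)$, which diverges) — exactly the non-compactness-of-moments phenomenon the remarks flag. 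Verifying the probability-matching equality and the measurability of $U_k(t)$ in this posterior-sampling setting is routine given~\cite{russo2014learning}, so I would cite it rather than reprove it.
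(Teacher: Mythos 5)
Your skeleton is the same as the paper's: the Russo--Van Roy upper-confidence decomposition of the Bayesian regret (the ``probability-matching'' identity you invoke is exactly Proposition 1 of \cite{russo2014learning}, which the paper uses as Lemma~\ref{lemma:russo}), the concentration of the empirical mean of $\alpha$-stable samples via the finite $(1+\epsilon)$-moment (Lemma~\ref{lemma:stable_concentration}), a union bound over the $KT$ arm--round pairs, and the integral-plus-H\"older counting step giving $\sum_k n_k(T)^{1/(1+\epsilon)} \lesssim K^{\epsilon/(1+\epsilon)}T^{1/(1+\epsilon)}$ (your H\"older exponent is in fact slightly tighter than the $K^{1/(1+\epsilon)}$ the paper settles for). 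Your worry about needing a separate Gaussian tail bound on the posterior sample $\bar\mu_k(t)$ is unnecessary: the decomposition only requires $U_k(t)$ to be a deterministic function of the history, because $a_t$ and $a_t^*$ are identically distributed conditional on it.

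The genuine gap is in the confidence radius. You wrote $U_k(t)$ with a $\log T$ numerator, which presumes exponential concentration; Lemma~\ref{lemma:stable_concentration} only delivers a width scaling as $\delta^{-1/(1+\epsilon)}$, so the radius must carry $(2C(1+\epsilon,\alpha)/\delta)^{1/(1+\epsilon)}$ with $\delta$ kept as a free parameter. The paper handles this by clipping $U_k(t)$ to $[-M,M]$, charging $4M$ per step on the failure event so that the complement contributes $4KMT^2\delta$, and then optimizing over $\delta$ at the end. You correctly identified this as the delicate point but did not resolve it, and your suspicion about the exponent bookkeeping is warranted: balancing $\delta^{-1/(1+\epsilon)}(KT)^{1/(1+\epsilon)}$ against $KMT^2\delta$ gives a $T$-exponent of $3/(2+\epsilon)$, not $(1+\epsilon)/(1+2\epsilon)$, and the paper's own proof stops at the two-term finite-time expression with the remark that one ``chooses $\epsilon$ and $T$ to minimize the RHS,'' so the advertised exponent does not transparently follow from the displayed bound there either. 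In short, you reproduced the paper's argument up to exactly the step the paper leaves implicit, and that step is where the claimed rate would have to be justified.
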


\begin{proof}
Consider a $K$-armed bandit with rewards for arm $k$ drawn from $S_\alpha(0, \sigma, \mu_k)$. Let $n_{k}(t)$ denote the number of times arm $k$ has been pulled until time $t$. Then $t-1 = \sum_{k=1}^K n_k(t)$. Let us denote the empirical average reward for arm $k$ up to (and including) time $t-1$ as $\hat{r}_{k}(t)$, and denote the arm pulled at any time $t$ as $a_t$, and the optimal arm as $a^*_t$. We then set an upper  confidence bound for arm $k$ at any time $t$ as 
\begin{equation}
    U_k(t) = \text{clip}_{[-M, M]}\left[\hat{r}_{k}(t) + \sigma^{\frac{1}{\alpha}} \left(\frac{2C(1+\epsilon, \alpha)}{\delta n_k(t)^\epsilon}\right)^{\frac{1}{1+\epsilon}}\right] 
    \label{eqn:ucb_basic}
\end{equation} 
for $0 < \epsilon < 1, M > 0$. Let $E$ be the event when for all $k \in [K]$ arms, over all iterations $t \in [T]$, we have:
\begin{equation}
    |\hat{r}_k(t) - \mu_k | \leq  \sigma^{\frac{1}{\alpha}} \left(\frac{2C(1+\epsilon, \alpha)}{\delta n_k(t)^\epsilon}\right)^{\frac{1}{1+\epsilon}}.
    \label{eqn:event_e_basic}
\end{equation}
\begin{lemma}
For the setup described above, we have, for event $E$ and $\delta \in (0, 1)$,
\begin{equation*}
    \mathbb P(E^c) \leq KT\delta.
\end{equation*}
\label{lemma:stable_product_comp}
\end{lemma}
\begin{proof}
From Lemma~\ref{lemma:stable_concentration}, for arm $k \in [K]$ at time $t \in [T]$:
\begin{equation*}
    \mathbb P\left(|\hat{r}_k(t) - \mu_k| \leq \sigma^{\frac{1}{\alpha}} \left(\frac{2C(1+\epsilon, \alpha)}{\delta n_k(t)^\epsilon}\right)^{\frac{1}{1+\epsilon}}\right) \geq 1-\delta
\end{equation*}
The event $E^c$ holds whenever the bound is violated for at least one arm $k$ at one instance $t$. Therefore,
\begin{align*}
    \mathbb P(E^c) &\leq \mathbb P\left( \bigcup_{\substack{k=1\\t=1}}^{K, T} \left\{|\hat{r}_k(t-1) - \mu_k| > \sigma^{\frac{1}{\alpha}} \left(\frac{2C(1+\epsilon, \alpha)}{\delta n_k(t)^\epsilon}\right)^{\frac{1}{1+\epsilon}}\right\}\right) \\
    &\stackrel{(a)}{\leq} \sum_{\substack{k=1\\t=1}}^{K, T} \mathbb P\left(|\hat{r}_k(t-1) - \mu_k| > \sigma^{\frac{1}{\alpha}} \left(\frac{2C(1+\epsilon, \alpha)}{\delta n_k(t)^\epsilon}\right)^{\frac{1}{1+\epsilon}}\right) \\
    &\stackrel{(b)}{\leq} KT\delta.
\end{align*}
Where $(a)$ is an application of the union bound, and $(b)$ is obtained using Lemma~\ref{lemma:stable_concentration}.
\end{proof}

Using this lemma, we can now prove Theorem~\ref{theorem:regret}. We begin with the seminal result of~\cite{russo2014learning}.
\begin{lemma}[Proposition 1 of~\cite{russo2014learning}]
Let $\pi^{TS}$ be any policy followed by Thompson Sampling. For any sequence of upper confidence bounds $\{U_t | t \in \mathbb N\}$,
\begin{equation*}
    \text{BayesRegret}(T, \pi^{TS}) = \mathbb E\left[\sum_{t=1}^T\left(U_{a_t}(t) - \mu_{a_t}\right)\right] + \mathbb E\left[\sum_{t=1}^T\left(\mu_{a^*_t} - U_{a^*_t}(t)\right)\right]
\end{equation*}
\label{lemma:russo}
\end{lemma}

Let us begin from Lemma~\ref{lemma:russo}. By the tower rule, we can condition over event $E$:
\begin{multline*}
    \text{BayesRegret}(T, \pi^{TS}) =  \mathbb E\left[\sum_{t=1}^T\left(U_{a_t}(t) - \mu_{a_t}\right) + \left(\mu_{a^*_t} - U_{a^*_t}(t)\right) \Bigg| E \right]\mathbb P(E) +\\ \mathbb E\left[\sum_{t=1}^T\left(U_{a_t}(t) - \mu_{a_t}\right) + \left(\mu_{a^*_t} - U_{a^*_t}(t)\right) \Bigg| E^c \right]\mathbb P(E^c)
\end{multline*}
Since $\mathbb P(E) \leq 1$,
\begin{multline*}
    \text{BayesRegret}(T, \pi^{TS}) \leq  \mathbb E\left[\sum_{t=1}^T\left(U_{a_t}(t) - \mu_{a_t}\right) + \left(\mu_{a^*_t} - U_{a^*_t}(t)\right) \Bigg| E \right] +\\ \mathbb E\left[\sum_{t=1}^T\left(U_{a_t}(t) - \mu_{a_t}\right) + \left(\mu_{a^*_t} - U_{a^*_t}(t)\right) \Bigg| E^c \right]\mathbb P(E^c)
\end{multline*}
When $E^c$ holds, each term in the summation in the conditional expectation is bounded by $4M$ (Equation~\ref{eqn:ucb}). Therefore, 
\begin{align*}
    \text{BayesRegret}(T, \pi^{TS}) &\leq  \mathbb E\left[\sum_{t=1}^T\left(U_{a_t}(t) - \mu_{a_t}\right) + \left(\mu_{a^*_t} - U_{a^*_t}(t)\right) \Bigg| E \right] + 4MT\cdot\mathbb P(E^c) \\ 
    &\stackrel{(a)}{\leq}  \mathbb E\left[\sum_{t=1}^T\left(U_{a_t}(t) - \mu_{a_t}\right) + \left(\mu_{a^*_t} - U_{a^*_t}(t)\right) \Bigg| E \right] + 4KMT^2\delta
    \\
& \stackrel{(b)}{\leq} 2\mathbb E\left[\sum_{k=1}^K\sum_{t=1}^{T} \mathbbm 1\{A_t = k\}\left(\frac{2C(1+\epsilon, \alpha)}{\delta n_k(t)^\epsilon}\right)^{\frac{1}{1+\epsilon}}\right] + 4KMT^2\delta\\
& = 2\left(\frac{2C(1+\epsilon, \alpha)}{\delta}\right)^{\frac{1}{1+\epsilon}}\mathbb E\left[\sum_{k=1}^K\sum_{t=1}^{T} \mathbbm 1\{A_t = k\} \left(\frac{1}{n_k(t)^\epsilon}\right)^{\frac{1}{1+\epsilon}}\right] + 4KMT^2\delta\\
&\stackrel{(c)}{\leq} 2\left(\frac{2C(1+\epsilon, \alpha)}{\delta}\right)^{\frac{1}{1+\epsilon}}\mathbb E\left[\sum_{k=1}^{K} \int_{s=0}^{n_k(T)}\left(\frac{1}{s^{\epsilon}}\right)^{\frac{1}{1+\epsilon}}ds\right] + 4KMT^2\delta\\
&= 2(1+\epsilon)\left(\frac{2C(1+\epsilon, \alpha)}{\delta }\right)^{\frac{1}{1+\epsilon}}\mathbb E\left[\sum_{k=1}^{K} n_k(T)^{\frac{1}{1+\epsilon}}\right]+ 4KMT^2\delta\\
&\stackrel{(d)}{\leq} 4\left(\frac{2C(1+\epsilon, \alpha)}{\delta }\right)^{\frac{1}{1+\epsilon}}\mathbb E\left[K^{\frac{1}{1+\epsilon}}\left(\sum_{k=1}^{K} n_k(T)\right)^{\frac{1}{1+\epsilon}}\right]+4KMT^2\delta\\
&\stackrel{(e)}{\leq} 4\left(\frac{2C(1+\epsilon, \alpha)}{\delta }\right)^{\frac{1}{1+\epsilon}}\left(KT\right)^{\frac{1}{1+\epsilon}}+4KMT^2\delta.
\end{align*}
Here, $(a)$ follows from Lemma~\ref{lemma:stable_product_comp}, $(b)$ follows from event $E$: whenever $E$ holds, each term inside the summation is bounded by Equation~(\ref{eqn:event_e_basic}), $(b)$ follows from the upper bound of a finite discrete sum with a definite integral, $(c)$ follows from H\"older's Inquality of order $\frac{1}{1+\epsilon}$ and $(d)$ follows from $T = 1 + \sum_{k=1}^K n_k(T)$, and that $1+\epsilon < 2$. For a finite time analysis, we can now choose $\epsilon$ and $T$ such that it minimizes the RHS. This concludes the proof.
\end{proof}
\subsection*{Regret Analysis: Robust $\alpha$-Thompson Sampling}
Bounding the concentration of the robust mean estimator first requires a bound on the raw moments of symmetric $\alpha$-stable densities, provided in the next two results.
\begin{lemma}[Proposition 2.2 of~\cite{matsui2016fractional}]
For any random variable $X \sim S_\alpha(0, \sigma, 0), \epsilon \in (-\infty, \alpha - 1)$ and $\nu \in \mathbb R$, 
\begin{equation}
    \mathbb E[|X - \nu|^{1+\epsilon}] = \frac{\epsilon\cdot \sigma^{1+\epsilon}}{\sin\left(\frac{\pi\cdot\epsilon}{2}\right)\Gamma(1-\epsilon)}\left[\frac{\nu}{\sigma}\int_0^\infty u^{-(1+\epsilon)}e^{-u^\alpha}\sin\left(\frac{\nu u}{\sigma}\right)du+\alpha\int_0^\infty u^{\alpha-\epsilon-2}e^{-u^\alpha}\cos\left(\frac{\nu u}{\sigma}\right)du\right]
\end{equation}
\label{lemma:raw_moments}
\end{lemma}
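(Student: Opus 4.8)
\textit{Proof proposal.} The plan is to reduce the fractional absolute moment of the shifted variable $X-\nu$ to an integral against its characteristic function, and then perform a single integration by parts to split it into the two stated pieces. Since $X\sim S_\alpha(0,\sigma,0)$ is symmetric with $\mathbb E[e^{iuX}]=e^{-\sigma^\alpha|u|^\alpha}$, the variable $Y:=X-\nu$ satisfies $\mathbb E[\cos(uY)]=\cos(u\nu)e^{-\sigma^\alpha u^\alpha}$, the sine contribution vanishing by symmetry of $X$. The starting point is the classical identity
\[|y|^{p}=\frac{1}{C_p}\int_0^\infty\frac{1-\cos(yu)}{u^{1+p}}\,du,\qquad C_p=\int_0^\infty\frac{1-\cos w}{w^{1+p}}\,dw,\]
valid for $0<p<2$, hence directly usable for $p=1+\epsilon$ with $\epsilon\in(0,\alpha-1)$ (the only range needed in the regret proof). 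For the remaining $\epsilon\in(-\infty,0]$ one continues analytically in $p$, starting for $-2<\epsilon<-1$ from the companion identity $|y|^p=c_p\int_0^\infty u^{-1-p}\cos(yu)\,du$ and iterating.

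Next I would apply this with $p=1+\epsilon$, $y=X-\nu$, take expectations, and exchange $\mathbb E$ with $\int_0^\infty$ by Fubini--Tonelli, which is legitimate because $\mathbb E|X-\nu|^{1+\epsilon}<\infty$ by Fact~\ref{fact:moment} (order $1+\epsilon<\alpha$) and the truncated integrand is dominated. This yields $\mathbb E|X-\nu|^{1+\epsilon}=C_{1+\epsilon}^{-1}\int_0^\infty u^{-(2+\epsilon)}\bigl(1-\cos(u\nu)e^{-\sigma^\alpha u^\alpha}\bigr)\,du$. A substitution $u\mapsto u/\sigma$ pulls out a factor $\sigma^{1+\epsilon}$ and replaces the integrand's arguments by $e^{-u^\alpha}$ and $\cos(\nu u/\sigma)$.

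The key step is one integration by parts on $\int_0^\infty u^{-(2+\epsilon)}\bigl(1-g(u)\bigr)\,du$ with $g(u)=\cos(\nu u/\sigma)e^{-u^\alpha}$, integrating $u^{-(2+\epsilon)}$ and differentiating $1-g$. The boundary terms vanish: at infinity because $u^{-(1+\epsilon)}\to0$, and at the origin because $1-g(u)=O(u^{\alpha})$ (using $\alpha<2$) while $u^{-(1+\epsilon)}u^{\alpha}=u^{\alpha-1-\epsilon}\to0$ precisely under the hypothesis $\epsilon<\alpha-1$. Differentiating $g$ produces exactly two terms, $-(\nu/\sigma)\sin(\nu u/\sigma)e^{-u^\alpha}$ from the cosine and $-\alpha u^{\alpha-1}\cos(\nu u/\sigma)e^{-u^\alpha}$ from the exponential, which after multiplication by $u^{-(1+\epsilon)}$ and the $1/(1+\epsilon)$ from the parts formula give the two integrals $\frac{\nu}{\sigma}\int_0^\infty u^{-(1+\epsilon)}e^{-u^\alpha}\sin(\nu u/\sigma)\,du$ and $\alpha\int_0^\infty u^{\alpha-\epsilon-2}e^{-u^\alpha}\cos(\nu u/\sigma)\,du$, each convergent separately for $\epsilon<\alpha-1$. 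It remains to simplify the prefactor $\sigma^{1+\epsilon}/\bigl((1+\epsilon)C_{1+\epsilon}\bigr)$: using $\int_0^\infty w^{-p}\sin w\,dw=\Gamma(1-p)\cos(\pi p/2)$ one gets $C_{1+\epsilon}=\Gamma(1-\epsilon)\sin(\pi\epsilon/2)/(\epsilon(1+\epsilon))$ via the reflection identity $\Gamma(-\epsilon)=-\Gamma(1-\epsilon)/\epsilon$ and $\cos(\pi(1+\epsilon)/2)=-\sin(\pi\epsilon/2)$, so the prefactor collapses to $\epsilon\,\sigma^{1+\epsilon}/\bigl(\sin(\pi\epsilon/2)\Gamma(1-\epsilon)\bigr)$, which is the claimed constant.

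I expect the main obstacle to be purely technical: justifying the $|y|^{p}$ integral representation uniformly over the whole range $\epsilon\in(-\infty,\alpha-1)$ — in particular the analytic-continuation argument for $\epsilon\le -1$, where $|X-\nu|^{1+\epsilon}$ is a negative-order moment and the $1-\cos$ kernel is no longer integrable — and keeping careful track of the sign-sensitive trigonometric and Gamma-function constants so the final closed form matches exactly. By contrast, the Fubini exchange and the single integration by parts are routine given the moment bound from Fact~\ref{fact:moment}.
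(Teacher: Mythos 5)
The paper does not prove this lemma at all: it is imported verbatim as Proposition~2.2 of the cited reference (Matsui--Pawlas), so there is no in-paper argument to compare against. Your sketch is, in substance, the standard derivation of such fractional-moment formulas (and essentially the one used in that reference): the Zolotarev-type kernel $|y|^{p}=C_p^{-1}\int_0^\infty u^{-1-p}(1-\cos(yu))\,du$, Tonelli (automatic here since $1-\cos\geq 0$), the characteristic function $\mathbb E[\cos(u(X-\nu))]=\cos(u\nu)e^{-\sigma^\alpha u^\alpha}$, one integration by parts, and the reflection/shift identities for $\Gamma$. I checked the details: the boundary term at the origin vanishes precisely when $\epsilon<\alpha-1$ (since $1-g(u)=O(u^\alpha)$ for $\alpha<2$), the one at infinity when $\epsilon>-1$; your evaluation $C_{1+\epsilon}=\Gamma(1-\epsilon)\sin(\pi\epsilon/2)/(\epsilon(1+\epsilon))$ is correct; and at $\nu=0$ the resulting formula reduces (via duplication and reflection) to the Shao--Nikias moment formula quoted as Lemma~4 in this appendix, which is a good consistency check. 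So for $\epsilon\in(-1,\alpha-1)$ -- and in particular for $\epsilon\in(0,\alpha-1)$, the only range the paper ever uses downstream in Lemma~6 and the truncated-mean bound -- your argument is complete.

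The one genuine soft spot is exactly the one you flag: the range $\epsilon\le-1$. There your kernel identity fails, and the "iterate the companion cosine identity" plan cannot work all the way down, because for $\epsilon\le-2$ the moment $\mathbb E[|X-\nu|^{1+\epsilon}]$ is simply infinite whenever the density is positive at $\nu$ (the integrand $|x-\nu|^{1+\epsilon}$ is not locally integrable). In other words the stated hypothesis $\epsilon\in(-\infty,\alpha-1)$ is an over-generous transcription of the source result rather than something you should expect to prove; the companion identity $|y|^{p}=c_p\int_0^\infty u^{-1-p}\cos(yu)\,du$ does cover $\epsilon\in(-2,-1)$, and that is as far as the statement can be true. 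I would simply restrict the claim to $\epsilon\in(-1,\alpha-1)$ (or note the $(-2,-1)$ extension separately) rather than attempt an analytic continuation to all of $(-\infty,\alpha-1)$.
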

\begin{lemma}
For any random variable $X \sim S_\alpha(0, \sigma, \mu), \epsilon \in (-\infty, \alpha -1), \mu \leq M $,
\begin{equation}
    \mathbb E[|X|^{1+\epsilon}] \leq \frac{\epsilon\cdot \sigma^{1+\epsilon}\left(M \cdot \Gamma(-\epsilon/\alpha)+ \sigma\alpha\Gamma(1 - \frac{\epsilon + 1}{\alpha})\right)}{\sigma\alpha\sin\left(\frac{\pi\cdot\epsilon}{2}\right)\Gamma(1-\epsilon)}
\end{equation}
\label{lemma:raw_moment_bound}
\end{lemma}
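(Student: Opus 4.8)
\textit{Proof proposal.} The plan is to reduce to the centred distribution and then apply Lemma~\ref{lemma:raw_moments} with a shift equal to $-\mu$, after which only elementary estimates of two explicit integrals remain. Since $X\sim S_\alpha(0,\sigma,\mu)$, Lemma~\ref{lemma:stable_scales} gives $Y:=X-\mu\sim S_\alpha(0,\sigma,0)$, so $\mathbb E[|X|^{1+\epsilon}]=\mathbb E[|Y-\nu|^{1+\epsilon}]$ with $\nu=-\mu$. Substituting $\nu=-\mu$ into Lemma~\ref{lemma:raw_moments} and using that $\sin$ is odd and $\cos$ even (the two sign flips in the first integral cancel), one obtains
\begin{equation*}
\mathbb E[|X|^{1+\epsilon}]=\frac{\epsilon\,\sigma^{1+\epsilon}}{\sin(\pi\epsilon/2)\,\Gamma(1-\epsilon)}\left[\frac{\mu}{\sigma}\int_0^\infty u^{-(1+\epsilon)}e^{-u^\alpha}\sin\!\Big(\frac{\mu u}{\sigma}\Big)du+\alpha\int_0^\infty u^{\alpha-\epsilon-2}e^{-u^\alpha}\cos\!\Big(\frac{\mu u}{\sigma}\Big)du\right],
\end{equation*}
and since the prefactor is positive for $\epsilon\in(0,1)$ it suffices to bound the bracket from above.

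For the cosine integral I would use $\cos(\cdot)\le1$ and the substitution $w=u^\alpha$, which turns $\alpha\int_0^\infty u^{\alpha-\epsilon-2}e^{-u^\alpha}du$ into $\int_0^\infty w^{-(\epsilon+1)/\alpha}e^{-w}dw=\Gamma\!\big(1-\tfrac{\epsilon+1}{\alpha}\big)$, the integral converging precisely because $\epsilon<\alpha-1$. For the sine integral I would bound $|\sin(\mu u/\sigma)|\le1$ and $|\mu|\le M$ (using $\mu\le M$ together with the symmetry of $S_\alpha(0,\sigma,\cdot)$ to reduce to $\mu\ge0$, or simply $\mu\in[-M,M]$ as elsewhere in the paper), which leaves $\tfrac{M}{\sigma}\int_0^\infty u^{-(1+\epsilon)}e^{-u^\alpha}du$; the same change of variables identifies this with $\tfrac{M}{\sigma\alpha}\Gamma(-\epsilon/\alpha)$. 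Adding the two bounds and pulling $\tfrac{1}{\sigma\alpha}$ out of the bracket produces exactly the claimed expression.

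The delicate point — and the step on which I expect to spend the most effort — is the sine integral. The integral $\int_0^\infty u^{-(1+\epsilon)}e^{-u^\alpha}du$ is an ordinary convergent integral only for $\epsilon<0$; for the relevant range $\epsilon\in(0,\alpha-1)$ its identification with $\tfrac1\alpha\Gamma(-\epsilon/\alpha)$ must be read through the analytic continuation of $\Gamma$ (equivalently, one integration by parts transfers a derivative onto $e^{-u^\alpha}$ and yields a genuinely convergent integrand with the same value). One must also track the sign of $\Gamma(-\epsilon/\alpha)$, which is negative on $(-1,0)$, and verify that the \emph{signed} sine integral $\tfrac{\mu}{\sigma}\int_0^\infty u^{-(1+\epsilon)}e^{-u^\alpha}\sin(\mu u/\sigma)\,du$ is indeed dominated by $\tfrac{M}{\sigma\alpha}\Gamma(-\epsilon/\alpha)$; the cruder but automatically valid estimate coming from $\sin x\le x$, namely $\le\tfrac{\mu^2}{\sigma^2\alpha}\Gamma(\tfrac{1-\epsilon}{\alpha})$, does not match the target form, so this finer argument appears to be unavoidable. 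Everything else — the change of variables, the convergence bookkeeping and the closing algebra — is routine.
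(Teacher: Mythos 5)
Your route is the same as the paper's: centre via Lemma~\ref{lemma:stable_scales}, apply Lemma~\ref{lemma:raw_moments} with $\nu=-\mu$, flip the sign of the sine term, bound $\sin$ and $\cos$ by $1$, substitute $t=u^\alpha$, and invoke $\mu\le M$. The ``delicate point'' you flag is, however, a genuine gap, and it is not resolved in the paper either: the published proof performs exactly the crude estimate you distrust. Concretely, for the range that matters in the regret analysis, $\epsilon\in(0,\alpha-1)$, replacing $\sin(\mu u/\sigma)$ by $1$ leaves the integral $\int_0^\infty u^{-(1+\epsilon)}e^{-u^\alpha}\,du$, which diverges at $u=0$ (the integrand behaves like $u^{-(1+\epsilon)}$ with $1+\epsilon>1$; the original integral converges only because $\sin(\mu u/\sigma)=O(u)$ there). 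The subsequent identification of this integral with $\tfrac{1}{\alpha}\Gamma(-\epsilon/\alpha)$ is therefore an evaluation of the analytic continuation of $\Gamma$, and since $\Gamma$ is negative on $(-1,0)$, the chain of inequalities reads ``finite $\le +\infty =$ negative number,'' which is not a valid bound. Indeed, for $M$ large the right-hand side of the lemma becomes negative while $\mathbb E[|X|^{1+\epsilon}]\ge 0$, so the stated inequality cannot hold as written; you were right that the finer argument is unavoidable, but neither your proposal nor the paper supplies it.

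To close the gap one can bound $|\sin x|\le\min(|x|,1)$ and split the sine integral at $u=\sigma/|\mu|$, giving
$\bigl|\tfrac{\mu}{\sigma}\int_0^\infty u^{-(1+\epsilon)}e^{-u^\alpha}\sin(\tfrac{\mu u}{\sigma})\,du\bigr|\le \tfrac{\mu^2}{\sigma^2}\int_0^{\sigma/|\mu|}u^{-\epsilon}e^{-u^\alpha}\,du+\tfrac{|\mu|}{\sigma}\int_{\sigma/|\mu|}^{\infty}u^{-(1+\epsilon)}e^{-u^\alpha}\,du$,
where both pieces are finite and positive ($u^{-\epsilon}$ is integrable at $0$ for $\epsilon<1$, and the second integrand is supported away from $0$). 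This yields a correct bound of the same qualitative shape, polynomial in $M$ and $\sigma$, but with different, manifestly positive constants. The downstream argument (Lemma~\ref{lemma:concentration_log} and Theorem~\ref{theorem:regret_log}) only needs \emph{some} finite raw-moment bound $u$ of this type, so the regret rate survives, but the constant $H(\epsilon,\alpha,\sigma)$ that defines the truncation threshold in Equation~(\ref{eqn:truncated}) would have to be replaced by the corrected one.
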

\begin{proof}
Let $X \sim S_\alpha(0, \sigma, \mu)$. Then $X-\mu \sim S_\alpha(0, \sigma, 0)$. Applying Lemma~\ref{lemma:raw_moments} to $X-\mu$ with $\nu = -\mu, \epsilon \in (-\infty, \alpha -1)$, we have
\begin{align*}
    \mathbb E[|X|^{1+\epsilon}] &= \mathbb E[|X - \mu - (-\mu)|^{1+\epsilon}] \\
    &=  \frac{\epsilon\cdot \sigma^{1+\epsilon}}{\sin\left(\frac{\pi\cdot\epsilon}{2}\right)\Gamma(1-\epsilon)}\left[\frac{-\mu}{\sigma}\int_0^\infty u^{-(1+\epsilon)}e^{-u^\alpha}\sin\left(\frac{-\mu u}{\sigma}\right)du+\alpha\int_0^\infty u^{\alpha-\epsilon-2}e^{-u^\alpha}\cos\left(\frac{\mu u}{\sigma}\right)du\right] \\
    &\stackrel{(a)}{=}\frac{\epsilon\cdot \sigma^{1+\epsilon}}{\sin\left(\frac{\pi\cdot\epsilon}{2}\right)\Gamma(1-\epsilon)}\left[\frac{\mu}{\sigma}\int_0^\infty u^{-(1+\epsilon)}e^{-u^\alpha}\sin\left(\frac{\mu u}{\sigma}\right)du+\alpha\int_0^\infty u^{\alpha-\epsilon-2}e^{-u^\alpha}\cos\left(\frac{\mu u}{\sigma}\right)du\right] \\ 
    &\stackrel{(b)}{\leq} \frac{\epsilon\cdot \sigma^{1+\epsilon}}{\sin\left(\frac{\pi\cdot\epsilon}{2}\right)\Gamma(1-\epsilon)}\left[\frac{\mu}{\sigma}\int_0^\infty u^{-(1+\epsilon)}e^{-u^\alpha}du+\alpha\int_0^\infty u^{\alpha-\epsilon-2}e^{-u^\alpha}du\right] \\
    &\stackrel{(c)}{=}\frac{\epsilon\cdot \sigma^{1+\epsilon}}{\sin\left(\frac{\pi\cdot\epsilon}{2}\right)\Gamma(1-\epsilon)}\left[\frac{\mu}{\sigma\alpha}\int_0^\infty t^{-1-\epsilon/\alpha}e^{-t}dt+\int_0^\infty t^{-\frac{1+\epsilon}{\alpha}}e^{-t}dt\right]\\
    &\stackrel{(d)}{\leq}\frac{\epsilon\cdot \sigma^{1+\epsilon}\left(M\cdot \Gamma(-\epsilon/\alpha)+ \sigma\alpha\Gamma(1 - \frac{\epsilon + 1}{\alpha})\right)}{\sigma\alpha\sin\left(\frac{\pi\cdot\epsilon}{2}\right)\Gamma(1-\epsilon)}
\end{align*}
Here, $(a)$ follows from $\sin(-x) = -\sin(x)$, $(b)$ follows from $\sin(x) \leq 1, \cos(x) \leq 1\ \forall x$, $(c)$ follows by the substitution $t = u^\alpha$, and $(d)$ follows from $\mu \leq M$.
\end{proof}
Before stating the proof, we first describe the concentration results for the raw moments of the truncated mean estimator.
\begin{lemma}[Lemma 1 of~\cite{bubeck2013bandits}]
Let $\delta \in (0, 1), \epsilon \in (0, 1], u > 0$. Consider the truncated empirical mean $\hat\mu_T$ defined as,
\begin{equation}
    \hat\mu_T = \frac{1}{n}\sum_{t=1}^n X_t \mathbbm{1}\left\{|X_t| \leq \left(\frac{ut}{\log(\delta^{-1})}\right)^{\frac{1}{1+\epsilon}}\right\}
\end{equation}
If $\mathbb E[|X|^{1+\epsilon}] < u, \mathbb E[X] = \mu$, then with probability $1-\delta$,
\begin{equation}
    \hat\mu_t \leq \mu + 4u^{\frac{1}{1+\epsilon}}\left(\frac{\log(\delta^{-1})}{n}\right)^{\frac{\epsilon}{1+\epsilon}}
\end{equation}
\label{lemma:robust_mean_bound}
\end{lemma}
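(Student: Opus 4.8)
The plan is to establish this concentration bound for the truncated empirical mean via the classical bias--variance decomposition for truncated estimators. Write $L = \log(\delta^{-1})$ and let $B_t = (ut/L)^{1/(1+\epsilon)}$ denote the time-varying truncation threshold, so that with $Y_t = X_t\,\mathbbm{1}\{|X_t|\le B_t\}$ we have $\hat\mu_T = \frac1n\sum_{t=1}^n Y_t$. The $X_t$ are i.i.d., but because the thresholds $B_t$ differ across $t$ the summands $Y_t$ are independent though not identically distributed. I would split the deviation as
\[
\hat\mu_T - \mu = \underbrace{\frac1n\sum_{t=1}^n\bigl(Y_t - \mathbb E[Y_t]\bigr)}_{\text{fluctuation}} + \underbrace{\frac1n\sum_{t=1}^n\bigl(\mathbb E[Y_t] - \mu\bigr)}_{\text{bias}},
\]
and bound each piece so that the two together produce the factor $4$ in the statement.

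For the bias, observe that $\mu - \mathbb E[Y_t] = \mathbb E[X_t\,\mathbbm{1}\{|X_t|>B_t\}]$, so $|\mu - \mathbb E[Y_t]| \le \mathbb E[|X_t|\,\mathbbm{1}\{|X_t|>B_t\}]$. On the event $\{|X_t|>B_t\}$ one has $|X_t| \le |X_t|^{1+\epsilon}B_t^{-\epsilon}$, whence this is at most $\mathbb E[|X|^{1+\epsilon}]\,B_t^{-\epsilon} \le u\,B_t^{-\epsilon}$. Substituting $B_t$ and averaging gives $\frac1n\sum_t u\,B_t^{-\epsilon} = u^{1/(1+\epsilon)}L^{\epsilon/(1+\epsilon)}n^{-1}\sum_{t=1}^n t^{-\epsilon/(1+\epsilon)}$. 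Since $s\mapsto s^{-\epsilon/(1+\epsilon)}$ is decreasing with exponent in $(0,1)$, the sum is at most $(1+\epsilon)\,n^{1/(1+\epsilon)}$, and using $\epsilon\le 1$ the bias is bounded by $2\,u^{1/(1+\epsilon)}(L/n)^{\epsilon/(1+\epsilon)}$.

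For the fluctuation I would apply a one-sided Bernstein inequality to the independent, bounded, centered variables $Y_t - \mathbb E[Y_t]$. This needs a variance bound and a uniform bound. For the variance, $\mathbb E[Y_t^2] \le \mathbb E[X_t^2\,\mathbbm{1}\{|X_t|\le B_t\}] \le B_t^{1-\epsilon}\,\mathbb E[|X|^{1+\epsilon}] \le u\,B_t^{1-\epsilon}$, using $x^2 \le |x|^{1+\epsilon}B_t^{1-\epsilon}$ for $|x|\le B_t$; summing (via $\sum_t t^{(1-\epsilon)/(1+\epsilon)} \lesssim n^{2/(1+\epsilon)}$) gives $\sum_t \mathbb E[Y_t^2] \lesssim u^{2/(1+\epsilon)}L^{-(1-\epsilon)/(1+\epsilon)}n^{2/(1+\epsilon)}$. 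For the uniform bound, $|Y_t - \mathbb E[Y_t]| \le 2B_t \le 2B_n = 2(un/L)^{1/(1+\epsilon)}$. Evaluating Bernstein at the target deviation $x^\star = 2\,u^{1/(1+\epsilon)}L^{\epsilon/(1+\epsilon)}n^{1/(1+\epsilon)}$ for $S = \sum_t(Y_t - \mathbb E[Y_t])$, the crucial check is that the exponent $\tfrac{(x^\star)^2/2}{\sum_t\mathbb E[Y_t^2] + B_n x^\star/3}$ is of order $L$: the two denominator terms $\sum_t\mathbb E[Y_t^2]$ and $B_n x^\star$ are both of order $u^{2/(1+\epsilon)}n^{2/(1+\epsilon)}L^{-(1-\epsilon)/(1+\epsilon)}$, while the numerator $(x^\star)^2$ carries an extra factor $L^{(1+\epsilon)/(1+\epsilon)}=L$, so the failure probability is $\le e^{-L}=\delta$. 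This yields a fluctuation bound of $2\,u^{1/(1+\epsilon)}(L/n)^{\epsilon/(1+\epsilon)}$, and adding it to the bias produces the claimed $4\,u^{1/(1+\epsilon)}(L/n)^{\epsilon/(1+\epsilon)}$.

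The main obstacle is the Bernstein step with heterogeneous truncation levels: one must track constants carefully so that, at the target deviation, the variance sum and the max-times-deviation term in the Bernstein denominator come out to the same order, leaving a clean exponent of order $L$. This fortunate cancellation---that $\sum_t\mathbb E[Y_t^2]$ and $B_n x^\star$ are of identical order---is precisely what singles out the threshold $(ut/L)^{1/(1+\epsilon)}$ as the correct choice and delivers the stated rate with an absolute constant.
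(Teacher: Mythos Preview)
The paper does not prove this lemma; it is quoted verbatim as Lemma~1 of Bubeck, Cesa-Bianchi and Lugosi (2013) and used as a black box. Your proposal is precisely the argument given in that reference: the bias--variance split $\hat\mu_T-\mu=(\text{fluctuation})+(\text{bias})$, the moment trick $|X|\le |X|^{1+\epsilon}B_t^{-\epsilon}$ on $\{|X|>B_t\}$ for the bias, the dual trick $X^2\le |X|^{1+\epsilon}B_t^{1-\epsilon}$ on $\{|X|\le B_t\}$ for the variance, and a one-sided Bernstein inequality for independent, non-identically distributed, bounded summands. Your order-of-magnitude check that both denominator terms in Bernstein scale as $u^{2/(1+\epsilon)}n^{2/(1+\epsilon)}L^{-(1-\epsilon)/(1+\epsilon)}$ while the numerator carries an extra $L$ is exactly the cancellation that drives the proof.

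The only place your sketch is loose is the constant: to land on exactly $4$ one has to bound $\sum_{t\le n}t^{(1-\epsilon)/(1+\epsilon)}$ and $\sum_{t\le n}t^{-\epsilon/(1+\epsilon)}$ sharply (by $\tfrac{1+\epsilon}{2}n^{2/(1+\epsilon)}$ and $(1+\epsilon)n^{1/(1+\epsilon)}$ respectively) and then feed the resulting explicit variance and range bounds into Bernstein so that the exponent is at least $L$ rather than merely $\Theta(L)$. This bookkeeping is routine and is carried out in the cited reference; your outline is otherwise complete and correct.
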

\begin{lemma}
For any $\delta \in (0, 1)$, arm $k \in [K]$, time $t > 0$, maximum possible mean $M > 0$, and $\epsilon \in (0, \alpha - 1)$, we have, with probability at least $1-\delta$,
\begin{equation*}
    |\hat r^*_k(t) - \mu_k| \leq 4\sigma\left(\frac{\epsilon\left(M \Gamma(-\epsilon/\alpha)+ \sigma\alpha\Gamma(1 - \frac{\epsilon + 1}{\alpha})\right)}{\sigma\alpha\sin\left(\frac{\pi\cdot\epsilon}{2}\right)\Gamma(1-\epsilon)}\right)^{\frac{1}{1+\epsilon}}\left(\frac{\log(\delta^{-1})}{n}\right)^{\frac{\epsilon}{1+\epsilon}}
\end{equation*}
\label{lemma:concentration_log}
\end{lemma}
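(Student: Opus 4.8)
The plan is to apply Lemma~\ref{lemma:robust_mean_bound} directly, so the task reduces to checking that the truncation threshold used in the definition of $\hat r^*_k(t)$ (Equation~(\ref{eqn:truncated})) is of the form required by that lemma, and that the relevant raw moment is bounded by the constant $H(\epsilon,\alpha,\sigma)$. First I would observe that the threshold in~(\ref{eqn:truncated}) is $\left(\tfrac{H(\epsilon,\alpha,\sigma)\cdot i}{2\log T}\right)^{1/(1+\epsilon)}$; matching this against the form $\left(\tfrac{ut}{\log(\delta^{-1})}\right)^{1/(1+\epsilon)}$ in Lemma~\ref{lemma:robust_mean_bound} suggests the identification $u = H(\epsilon,\alpha,\sigma)$ together with $\log(\delta^{-1}) = 2\log T$, i.e. $\delta = T^{-2}$. (I would flag this as the one point needing care: the lemma as stated carries a free $\delta$, whereas the estimator hard-codes $2\log T$ in the denominator; in the regret proof the relevant failure probability will indeed be taken as a power of $T$, so one either invokes the lemma at that specific $\delta$ or, more cleanly, restates it with $\log(\delta^{-1})$ replaced by the generic truncation level. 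I will assume the former reading, which is what the downstream regret argument wants.)

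Next I would verify the moment hypothesis $\mathbb E[|r_k^{(i)}|^{1+\epsilon}] < u$. Since $r_k^{(i)} \sim S_\alpha(0,\sigma,\mu_k)$ with $\mu_k \le M$ and $\epsilon \in (0,\alpha-1)$, Lemma~\ref{lemma:raw_moment_bound} gives
\begin{equation*}
\mathbb E[|r_k^{(i)}|^{1+\epsilon}] \le \frac{\epsilon\,\sigma^{1+\epsilon}\left(M\,\Gamma(-\epsilon/\alpha) + \sigma\alpha\,\Gamma\!\left(1 - \tfrac{\epsilon+1}{\alpha}\right)\right)}{\sigma\alpha\sin\!\left(\tfrac{\pi\epsilon}{2}\right)\Gamma(1-\epsilon)} = \sigma^{1+\epsilon}\,H(\epsilon,\alpha,\sigma),
\end{equation*}
using the definition of $H$ from~(\ref{eqn:truncated}). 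So the bounding constant to feed into Lemma~\ref{lemma:robust_mean_bound} is $u = \sigma^{1+\epsilon}H(\epsilon,\alpha,\sigma)$ rather than $H$ itself; the extra $\sigma^{1+\epsilon}$ is exactly what produces the overall prefactor $\sigma$ (since $u^{1/(1+\epsilon)} = \sigma\, H^{1/(1+\epsilon)}$) in the claimed bound. I would note that the threshold in~(\ref{eqn:truncated}) should then correspondingly read $\left(\tfrac{\sigma^{1+\epsilon}H\cdot i}{2\log T}\right)^{1/(1+\epsilon)}$ for the identification $u = \sigma^{1+\epsilon}H$ to be literally consistent — a minor bookkeeping discrepancy in the statement that does not affect the final order.

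With these identifications in hand, the conclusion is immediate: Lemma~\ref{lemma:robust_mean_bound} applied to the i.i.d. sequence $r_k^{(1)},\dots,r_k^{(n_k(t))}$ (finite mean $\mu_k$, raw $(1+\epsilon)$ moment at most $u = \sigma^{1+\epsilon}H(\epsilon,\alpha,\sigma)$) gives, with probability at least $1-\delta$,
\begin{equation*}
\hat r^*_k(t) \le \mu_k + 4\,u^{\frac{1}{1+\epsilon}}\left(\frac{\log(\delta^{-1})}{n}\right)^{\frac{\epsilon}{1+\epsilon}} = \mu_k + 4\sigma\left(\frac{\epsilon\left(M\Gamma(-\epsilon/\alpha) + \sigma\alpha\Gamma\!\left(1-\tfrac{\epsilon+1}{\alpha}\right)\right)}{\sigma\alpha\sin\!\left(\tfrac{\pi\epsilon}{2}\right)\Gamma(1-\epsilon)}\right)^{\frac{1}{1+\epsilon}}\left(\frac{\log(\delta^{-1})}{n}\right)^{\frac{\epsilon}{1+\epsilon}},
\end{equation*}
and applying the same bound to the sequence $-r_k^{(i)}$ (which has the same distribution by symmetry $\beta = 0$, with mean $-\mu_k$ and the identical raw-moment bound) controls the lower tail. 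A union bound over the two tails at level $\delta/2$ each then yields the two-sided estimate $|\hat r^*_k(t) - \mu_k| \le 4\sigma(\cdots)^{1/(1+\epsilon)}(\log(\delta^{-1})/n)^{\epsilon/(1+\epsilon)}$ with probability at least $1-\delta$ (absorbing the harmless factor from $\delta/2$ vs.\ $\delta$ into constants, exactly as in the proof of Lemma~\ref{lemma:stable_concentration}). The only genuine obstacle, as noted, is reconciling the hard-coded $2\log T$ in the estimator's threshold with the generic $\log(\delta^{-1})$ of Lemma~\ref{lemma:robust_mean_bound}; everything else is substitution of the moment bound from Lemma~\ref{lemma:raw_moment_bound} and the symmetry argument for the two-sided version.
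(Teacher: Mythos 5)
Your proof follows the same route as the paper's: bound the raw $(1+\epsilon)$ moment via Lemma~\ref{lemma:raw_moment_bound} to get $u=\sigma^{1+\epsilon}H(\epsilon,\alpha,\sigma)$, then invoke Lemma~\ref{lemma:robust_mean_bound} on each tail at level $\delta/2$. You are in fact more careful than the paper, which silently glosses over both bookkeeping issues you flag (the hard-coded $2\log T$ versus the generic $\log(\delta^{-1})$, and the missing $\sigma^{1+\epsilon}$ factor in the stated truncation threshold); neither affects the order of the bound.
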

\begin{proof}
For any arm $k$, we know that the rewards follow $S_\alpha(0, \sigma, \mu_k)$. Applying Lemma~\ref{lemma:raw_moment_bound} to the rewards distribution for arm $k$, we obtain $u = \frac{\epsilon\cdot \sigma^{1+\epsilon}\left(M\cdot \Gamma(-\epsilon/\alpha)+ \sigma\alpha\Gamma(1 - \frac{\epsilon + 1}{\alpha})\right)}{\sigma\alpha\sin\left(\frac{\pi\cdot\epsilon}{2}\right)\Gamma(1-\epsilon)}$. Using this value of $u$ for both tails of $\hat r^*_k(t)$ in Lemma~\ref{lemma:robust_mean_bound} with probability $\delta/2$, we obtain the result.
\end{proof}
Using the above concentration result, we can now derive a regret bound on the robust $\alpha$-Thompson Sampling algorithm. For convenience, let $H(\epsilon, \alpha, \sigma) = \left(\frac{\epsilon\left(M \cdot \Gamma(-\epsilon/\alpha)+ \sigma\alpha\Gamma(1 - \frac{\epsilon + 1}{\alpha})\right)}{\sigma\alpha\sin\left(\frac{\pi\cdot\epsilon}{2}\right)\Gamma(1-\epsilon)}\right)$. First, let us state the form of the robust mean estimator.
\begin{definition}
Let $ \delta \in (0, 1), \epsilon \in (0, \alpha-1), k \in [K]$ and time horizon be $T$. The truncated mean estimator $\hat r_k^*(t)$ is given by:
\begin{align}
    \hat r_k^*(t) = \frac{1}{n_k(t)}\sum_{i=1}^{n_k(t)}r_k^{(i)}\mathbbm{1}\left\{|r_k^{(i)}| \leq \left(\frac{H(\epsilon, \alpha, \sigma)\cdot i}{2\log(T)}\right)^{\frac{1}{1+\epsilon}} \right\}
    \label{eqn:truncated}
\end{align}
\end{definition}
\begin{theorem}[Robust Regret Bound]
Let $K> 1, \alpha \in (1, 2), \sigma \in \mathbb R^+, \mu_{k: k \in [K]} \in \mathbb [-M, M]$. For a $K$-armed bandit with rewards for each arm $k$ drawn from $S_\alpha(\beta, \sigma, \mu_k)$, we have, for $\epsilon$ chosen a priori such that $\epsilon \rightarrow (\alpha - 1)^-$ and truncated estimator given in Equation~(\ref{eqn:truncated}), 
\begin{equation*}
    \text{BayesRegret}(T, \pi^{RTS}) = O\left((KT)^{\frac{1}{1+\epsilon}}(\log T)^{\frac{\epsilon}{1+\epsilon}}\right)
\end{equation*}
\label{theorem:regret_log}
\end{theorem}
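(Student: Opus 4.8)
The plan is to run the UCB-based Bayesian-regret template used for Theorem~\ref{theorem:regret}, but to drive it with the truncated-mean concentration of Lemma~\ref{lemma:concentration_log} in place of the polynomial empirical-mean concentration of Lemma~\ref{lemma:stable_concentration}. The one structural fact that buys the improvement is that in Lemma~\ref{lemma:concentration_log} the confidence parameter $\delta$ enters only through $\log(\delta^{-1})$; hence a polynomially small $\delta$ costs merely a $(\log T)^{\epsilon/(1+\epsilon)}$ factor in the confidence width, rather than the $T^{1/(1+\epsilon)}$ factor it costs in the $\alpha$-TS analysis. Everything downstream is then essentially the same bookkeeping as before.

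First I would fix $\delta = T^{-2}$, so that $\log(\delta^{-1}) = 2\log T$ matches exactly the truncation level that is hard-coded into the estimator in Equation~(\ref{eqn:truncated}) (this is precisely why that level was chosen). For each arm $k$ and round $t$ I set the width $w_k(t) = 4\sigma\,H(\epsilon,\alpha,\sigma)^{1/(1+\epsilon)}\bigl(2\log T/n_k(t)\bigr)^{\epsilon/(1+\epsilon)}$ and the clipped upper confidence bound $U_k(t) = \text{clip}_{[-M,M]}\bigl[\hat r_k^*(t)+w_k(t)\bigr]$, and let $E$ be the event that $|\hat r_k^*(t)-\mu_k|\le w_k(t)$ for every $k\in[K]$ and $t\in[T]$. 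Lemma~\ref{lemma:concentration_log} bounds the failure probability of each of these at most $KT$ instances by $\delta$, so a union bound gives $\mathbb P(E^c)\le KT\delta = K/T$.

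Next I would apply the Russo--Van Roy identity (Lemma~\ref{lemma:russo}) to this sequence of UCBs and condition on $E$ versus $E^c$, exactly as in the proof of Theorem~\ref{theorem:regret}. On $E^c$ each summand is at most $2M$ in absolute value because $U$ and $\mu$ both lie in $[-M,M]$, so that branch contributes at most $4MT\cdot\mathbb P(E^c)\le 4KM$, which is of lower order. On $E$, clipping leaves $U_{a^*_t}(t)\ge\mu_{a^*_t}$ (the unclipped quantity is already $\ge\mu_{a^*_t}$, a point of $[-M,M]$), so the ``$\mu_{a^*_t}-U_{a^*_t}(t)$'' part of the decomposition is nonpositive, while $U_{a_t}(t)\le\mu_{a_t}+2w_{a_t}(t)$, leaving $\sum_{t}2w_{a_t}(t)$ to control. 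Grouping this by arm, the $j$-th pull of arm $k$ contributes a multiple of $(2\log T)^{\epsilon/(1+\epsilon)}\,j^{-\epsilon/(1+\epsilon)}$; bounding $\sum_{j\le n}j^{-\epsilon/(1+\epsilon)}\le 1+\int_1^n x^{-\epsilon/(1+\epsilon)}\,dx\le(1+\epsilon)n^{1/(1+\epsilon)}$ and then applying H\"older's inequality of order $1+\epsilon$, $\sum_k n_k(T)^{1/(1+\epsilon)}\le K^{\epsilon/(1+\epsilon)}\bigl(\sum_k n_k(T)\bigr)^{1/(1+\epsilon)}\le (KT)^{1/(1+\epsilon)}$, collapses the whole expression to $O\bigl((KT)^{1/(1+\epsilon)}(\log T)^{\epsilon/(1+\epsilon)}\bigr)+4KM$, the hidden constant absorbing $\epsilon,\alpha,\sigma,M$ through $H$; this is the claimed bound.

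Given that Lemma~\ref{lemma:concentration_log} is already in hand, the only genuinely delicate point is the coupling between the \emph{algorithmic} truncation level (the fixed $2\log T$ of Equation~(\ref{eqn:truncated})) and the \emph{statistically required} level $\log(\delta^{-1})$: this pins the choice $\delta = T^{-2}$, and one then has to confirm that this single $\delta$ still makes the union-bound failure mass negligible, which works only because the trivial per-round regret $O(M)$, times the failure mass $K/T$, times the horizon $T$, is just $O(KM)$, dominated by the main term. The other thing needing a short (routine) check is that clipping to $[-M,M]$ on the good event damages neither $U_{a^*_t}(t)\ge\mu_{a^*_t}$ nor $U_{a_t}(t)\le\mu_{a_t}+2w_{a_t}(t)$; this holds because on $E$ the relevant one-sided bound already places the unclipped quantity on the correct side of a point of $[-M,M]$, and clipping toward $[-M,M]$ cannot move it across that point.
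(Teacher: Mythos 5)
Your proposal is correct and follows essentially the same route as the paper's proof: the Russo--Van Roy UCB decomposition conditioned on the good event from Lemma~\ref{lemma:concentration_log}, a union bound giving $\mathbb P(E^c)\le KT\delta$, the integral bound plus H\"older's inequality on the summed widths, and the choice $\delta = T^{-2}$ (which the paper defers to the final step rather than fixing up front). The only cosmetic difference is that you carry the sharper H\"older exponent $K^{\epsilon/(1+\epsilon)}$ where the paper loosens to $K^{1/(1+\epsilon)}$; both yield the stated $O\bigl((KT)^{1/(1+\epsilon)}(\log T)^{\epsilon/(1+\epsilon)}\bigr)$.
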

\begin{proof}
Consider a $K$-armed bandit with rewards for arm $k$ drawn from $S_\alpha(0, \sigma, \mu_k)$. Let $n_{k}(t)$ denote the number of times arm $k$ has been pulled until time $t$. Then $t-1 = \sum_{k=1}^K n_k(t)$. Let us denote the empirical average reward for arm $k$ up to (and including) time $t-1$ as $\hat{r}_{k}(t)$, and denote the arm pulled at any time $t$ as $a_t$, and the optimal arm as $a^*_t$. We then set an upper  confidence bound for arm $k$ at any time $t$ as 
\begin{equation}
    U_k(t) = \text{clip}_{[-M, M]}\left[\hat{r}_{k}(t) + 4\sigma H(\epsilon, \alpha, \sigma)^{\frac{1}{1+\epsilon}}\left(\frac{\log(2/\delta)}{n_k(t)}\right)^{\frac{\epsilon}{1+\epsilon}}\right] 
    \label{eqn:ucb}
\end{equation} 
for $0 < \epsilon < 1, M > 0$. Let $E$ be the event when for all $k \in [K]$ arms, over all iterations $t \in [T]$, we have:
\begin{equation}
    |\hat{r}_k(t) - \mu_k | \leq  4\sigma H(\epsilon, \alpha, \sigma)^{\frac{1}{1+\epsilon}}\left(\frac{\log(2/\delta)}{n_k(t)}\right)^{\frac{\epsilon}{1+\epsilon}}.
    \label{eqn:event_e}
\end{equation}
\begin{lemma}
For the setup described above, we have, for event $E$ and $\delta \in (0, 1)$,
\begin{equation*}
    \mathbb P(E^c) \leq KT\delta.
\end{equation*}
\label{lemma:product_log}
\end{lemma}
\begin{proof}
The event $E^c$ holds whenever the bound is violated for at least one arm $k$ at one instance $t$. Therefore,
\begin{align*}
    \mathbb P(E^c) &\leq \mathbb P\left( \bigcup_{\substack{k=1\\t=1}}^{K, T} \left\{|\hat{r}_k(t-1) - \mu_k| > 4\sigma H(\epsilon, \alpha, \sigma)^{\frac{1}{1+\epsilon}}\left(\frac{\log(2/\delta)}{n_k(t)}\right)^{\frac{\epsilon}{1+\epsilon}}\right\}\right)\\
    &\stackrel{(a)}{\leq} \sum_{\substack{k=1\\t=1}}^{K, T} \mathbb P\left(|\hat{r}_k(t-1) - \mu_k| > 4\sigma H(\epsilon, \alpha, \sigma)^{\frac{1}{1+\epsilon}}\left(\frac{\log(2/\delta)}{n_k(t)}\right)^{\frac{\epsilon}{1+\epsilon}}\right) \\
    &\stackrel{(b)}{\leq} KT\delta.
\end{align*}
Where $(a)$ is an application of the union bound, and $(b)$ is obtained using Lemma~\ref{lemma:concentration_log}.
\end{proof}
We now prove Theorem~\ref{theorem:regret_log} in a similar manner as Theorem~\ref{theorem:regret}, via Lemma~\ref{lemma:russo}. By the tower rule, we can condition over event $E$:
\begin{multline}
    \text{BayesRegret}(T, \pi^{RTS}) =  \mathbb E\left[\sum_{t=1}^T\left(U_{a_t}(t) - \mu_{a_t}\right) + \left(\mu_{a^*_t} - U_{a^*_t}(t)\right) \Bigg| E \right]\mathbb P(E) +\\ \mathbb E\left[\sum_{t=1}^T\left(U_{a_t}(t) - \mu_{a_t}\right) + \left(\mu_{a^*_t} - U_{a^*_t}(t)\right) \Bigg| E^c \right]\mathbb P(E^c)
\end{multline}
Since $\mathbb P(E) \leq 1$,
\begin{multline}
    \text{BayesRegret}(T, \pi^{RTS}) \leq  \mathbb E\left[\sum_{t=1}^T\left(U_{a_t}(t) - \mu_{a_t}\right) + \left(\mu_{a^*_t} - U_{a^*_t}(t)\right) \Bigg| E \right] +\\ \mathbb E\left[\sum_{t=1}^T\left(U_{a_t}(t) - \mu_{a_t}\right) + \left(\mu_{a^*_t} - U_{a^*_t}(t)\right) \Bigg| E^c \right]\mathbb P(E^c)
\end{multline}
When $E^c$ holds, each term in the summation in the conditional expectation is bounded by $4M$ (Equation~\ref{eqn:ucb}). Therefore, 
\begin{align*}
\text{BayesRegret}(T, \pi^{RTS}) &\leq   4MT\cdot\mathbb P(E^c) + \mathbb E\left[\sum_{t=1}^T\left(U_{a_t}(t) - \mu_{a_t}\right) + \left(\mu_{a^*_t} - U_{a^*_t}(t)\right) \Bigg| E \right] \\ 
&\stackrel{(a)}{\leq}   4KMT^2\delta +\mathbb E\left[\sum_{t=1}^T\left(U_{a_t}(t) - \mu_{a_t}\right) + \left(\mu_{a^*_t} - U_{a^*_t}(t)\right) \Bigg| E \right] \\
& \stackrel{(b)}{\leq}  4KMT^2\delta +8\mathbb E\left[\sum_{k=1}^K\sum_{t=1}^{T} \mathbbm 1\{A_t = k\} \sigma H(\epsilon, \alpha, \sigma)^{\frac{1}{1+\epsilon}}\left(\frac{\log(2/\delta)}{n_k(t)}\right)^{\frac{\epsilon}{1+\epsilon}}\right]\\
& =  4KMT^2\delta + 8\sigma H(\epsilon, \alpha, \sigma)^{\frac{1}{1+\epsilon}}\log(2/\delta)^{\frac{\epsilon}{1+\epsilon}}\mathbb E\left[\sum_{k=1}^K\sum_{t=1}^{T} \mathbbm 1\{A_t = k\} \left(\frac{1}{n_k(t)^\epsilon}\right)^{\frac{1}{1+\epsilon}}\right]\\
&\stackrel{(c)}{\leq} 4KMT^2\delta + 8\sigma H(\epsilon, \alpha, \sigma)^{\frac{1}{1+\epsilon}}\log(2/\delta)^{\frac{\epsilon}{1+\epsilon}}\mathbb E\left[\sum_{k=1}^{K} \int_{s=0}^{n_k(T)}\left(\frac{1}{s^{\epsilon}}\right)^{\frac{1}{1+\epsilon}}ds\right]\\
&=  4KMT^2\delta + 8(1+\epsilon)\sigma H(\epsilon, \alpha, \sigma)^{\frac{1}{1+\epsilon}}\log(2/\delta)^{\frac{\epsilon}{1+\epsilon}}\mathbb E\left[\sum_{k=1}^{K} n_k(T)^{\frac{1}{1+\epsilon}}\right]\\
&\stackrel{(d)}{\leq}  4KMT^2\delta + 16\sigma H(\epsilon, \alpha, \sigma)^{\frac{1}{1+\epsilon}}\log(2/\delta)^{\frac{\epsilon}{1+\epsilon}}\mathbb E\left[K^{\frac{1}{1+\epsilon}}\left(\sum_{k=1}^{K} n_k(T)\right)^{\frac{1}{1+\epsilon}}\right]\\
&\stackrel{(e)}{\leq} 4KMT^2\delta + 16\sigma H(\epsilon, \alpha, \sigma)^{\frac{1}{1+\epsilon}}\log(2/\delta)^{\frac{\epsilon}{1+\epsilon}}\left(KT\right)^{\frac{1}{1+\epsilon}}\\
&\stackrel{(f)}{=} 4KM + 16\sigma H(\epsilon, \alpha, \sigma)^{\frac{1}{1+\epsilon}}\left(\log(2)^{\frac{\epsilon}{1+\epsilon}}\right) + 32\sigma H(\epsilon, \alpha, \sigma)^{\frac{1}{1+\epsilon}}\log (T)^{\frac{\epsilon}{1+\epsilon}}\left(KT\right)^{\frac{1}{1+\epsilon}}.
\end{align*}
Here, $(a)$ follows from Lemma~\ref{lemma:product_log}, $(b)$ follows from event $E$: whenever $E$ holds, each term inside the summation is bounded by Equation~(\ref{eqn:event_e}), $(c)$ follows from the upper bound of a finite discrete sum with a definite integral, $(d)$ follows from H\"older's Inquality of order $\frac{1}{1+\epsilon}$ and $(e)$ follows from $T = 1 + \sum_{k=1}^K n_k(T)$, and that $1+\epsilon < 2$, and $(f)$ is obtained by setting $\delta = 1/T^2$.  Asymptotically,for $\epsilon$ chosen \textit{a priori} close to $\alpha-1$, 
\begin{equation}
    \text{BayesRegret}(T, \pi^{RTS}) = O((KT)^{\frac{1}{1+\epsilon}}(\log T)^{\frac{\epsilon}{1+\epsilon}}).
\end{equation} 
\end{proof}
We see that this bound is tight: when $\alpha = 2$, $\epsilon$ can be set to $1$, and we see that this matches the upper bound of $O(\sqrt{KT\log T})$\cite{russo2014learning} for the Gaussian case, which has also been shown to be optimal~\cite{agrawal2013further}. Hence, this alternative Thompson Sampling can be beneficial, with the performance improvements increasing with $\alpha$.
\bibliographystyle{alpha}
\bibliography{refs}
\end{document}